\documentclass[letterpaper]{article} 
\usepackage[draft]{aaai25}  
\usepackage{times}  
\usepackage{helvet}  
\usepackage{courier}  
\usepackage[hyphens]{url}  
\usepackage{graphicx} 
\urlstyle{rm} 
\usepackage{natbib}  
\usepackage{caption} 
\frenchspacing  
\setlength{\pdfpagewidth}{8.5in} 
\setlength{\pdfpageheight}{11in} 
%
\usepackage{algorithm}
\usepackage{algorithmic}

\usepackage{amsmath}
\usepackage{amssymb}
\usepackage{mathtools}
\usepackage{amsthm}

\usepackage[utf8]{inputenc} 
\usepackage{url}            
\usepackage{booktabs}       
\usepackage{amsfonts}       
\usepackage{nicefrac}       
\usepackage{microtype}      
\usepackage{xcolor}         
\usepackage{amsmath,amsthm}        
\usepackage{subcaption}      
\usepackage{pifont}
\usepackage[capitalize,noabbrev]{cleveref}
\usepackage{diagbox}
\usepackage{makecell}
\usepackage{multirow}
\usepackage{comment}

\usepackage[capitalize,noabbrev]{cleveref}

\theoremstyle{plain}
\newtheorem{theorem}{Theorem}[section]

\newtheorem{lemma}[theorem]{Lemma}

\theoremstyle{definition}
\newtheorem{definition}[theorem]{Definition}
\newtheorem{assumption}[theorem]{Assumption}
\theoremstyle{remark}


\DeclareMathOperator{\kl}{D_{\mathrm{KL}}}
\DeclareMathOperator{\DO}{\boldsymbol{do}}
\DeclareMathOperator{\Ent}{\mathcal{H}}
\DeclareMathOperator{\MI}{\mathcal{I}}
\DeclareMathOperator{\E}{\mathbb{E}}

\newcommand{\taujt}{\boldsymbol{\tau}}
\newcommand{\rolejt}{\mathbf{c}}
\newcommand{\rolei}{\mathbf{c}_i}
\newcommand{\queryjt}{\mathbf{q}}
\newcommand{\queryi}{\mathbf{q}_i}
\newcommand{\influjt}{\boldsymbol{\bar{I}}}
\newcommand{\influi}{\boldsymbol{\bar{I}}_i}
\newcommand{\counterinflui}{\boldsymbol{\widetilde{I}}_i}
\newcommand{\priorjt}{\mathcal{P(\mathbf{c})}}
\newcommand{\postjt}{\mathcal{P}(\rolejt|\influjt,\queryjt)}
\newcommand{\margjt}{\mathcal{P}(\rolejt|\queryjt)}
\newcommand{\posti}{\mathcal{P}(\rolei|\influi, \queryi)}
\newcommand{\margi}{\mathcal{P}(\rolei|\queryi)}
\newcommand{\margdoi}{\mathcal{P}(\rolei|\DO(\influi), \queryi)}


%
\usepackage{newfloat}
\usepackage{listings}
\DeclareCaptionStyle{ruled}{labelfont=normalfont,labelsep=colon,strut=off} 
\lstset{%
	basicstyle={\footnotesize\ttfamily},
	numbers=left,numberstyle=\footnotesize,xleftmargin=2em,
	aboveskip=0pt,belowskip=0pt,%
	showstringspaces=false,tabsize=2,breaklines=true}
\floatstyle{ruled}
\newfloat{listing}{tb}{lst}{}
\floatname{listing}{Listing}
%
\pdfinfo{
/TemplateVersion (2025.1)
}

\setcounter{secnumdepth}{1} 

%

\title{CORD: Generalizable Cooperation via Role Diversity}
\author {
    Kanefumi Matsuyama\textsuperscript{\rm 1}, 
    Kefan Su\textsuperscript{\rm 1}, 
    Jiangxing Wang\textsuperscript{\rm 1}, 
    Deheng Ye\textsuperscript{\rm 2}, 
    Zongqing Lu\textsuperscript{\rm 1}
}
\affiliations {
    \textsuperscript{\rm 1}School of Computer Science, Peking University\\
    \textsuperscript{\rm 2}Tencent Inc.\\
    \{matsuyama@,\,sukefan@,\,jiangxiw@stu.,\,zongqing.lu@\}pku.edu.cn, dericye@tencent.com
}

\usepackage{bibentry}

\begin{document}

\maketitle

\begin{abstract}
Cooperative multi-agent reinforcement learning (MARL) aims to develop agents that can collaborate effectively. However, most cooperative MARL methods overfit training agents, making learned policies not generalize well to unseen collaborators, which is a critical issue for real-world deployment. Some methods attempt to address the generalization problem but require prior knowledge or predefined policies of new teammates, limiting real-world applications. To this end, we propose a hierarchical MARL approach to enable generalizable cooperation via role diversity, namely CORD. CORD's high-level controller assigns roles to low-level agents by maximizing the role entropy with constraints. We show this constrained objective can be decomposed into causal influence in role that enables reasonable role assignment, and role heterogeneity that yields coherent, non-redundant role clusters. Evaluated on a variety of cooperative multi-agent tasks, CORD achieves better performance than baselines, especially in generalization tests. Ablation studies further demonstrate the efficacy of the constrained objective in generalizable cooperation.
\end{abstract}

%

\section{Introduction}
\label{section: intro}
Cooperative multi-agent reinforcement learning (MARL), where agents cooperate to maximize the shared reward, has a broad range of applications, from autonomous warehouse \citep{zhou2021multi}, power dispatch \citep{wang2021multiagent} to logistics \citep{li2019cooperative}, inventory management \citep{feng2022multiagent}. Recent years have witnessed substantial progress in different kinds of cooperative MARL algorithms, including value decomposition \citep{sunehag2018value,qmix,son2019qtran,wang2020qplex,refil}, multi-agent actor-critic \citep{yu2022surprising,zhang2021fop,kubatrust,wang2023more}, and fully decentralized learning \citep{su2022ma2ql,jiang2022i2q,su2022dpo}.  

However, existing algorithms often result in policies that overfit the co-trained teammates and scenarios encountered during training, thus lacking the generalization ability to cooperate effectively with new teammates in new scenarios. As agents may fail to adapt to unforeseen partners, the performance of the multi-agent system can degrade dramatically or even collapse entirely. 
To address the generalization challenge, a few methods \citep{barrett2015cooperating,gu2021online,copa} have been proposed. However, most approaches require prior knowledge or predefined policies about new teammates, limiting their applicability in complex real-world environments. In addition, cooperative MARL tasks usually require role divisions. Thus, agents may need to play different roles to collaborate with unseen teammates. However, existing work lacks the essential ability to learn the role assignment. Either not considering role division, or only learning role heterogeneity while neglecting complex inter-agent interactions, making it difficult to generalize in environments that require various roles.


In this paper, we propose a hierarchical MARL approach to enable generalizable \textbf{CO}operation via \textbf{R}ole \textbf{D}iversity, namely \textbf{CORD}. CORD does not depend on pre-defined agent policies or behaviors and can be trained end-to-end. In CORD, a high-level controller is responsible for analyzing the environment and assigning roles to low-level agents. The low-level agents then condition their policies on the assigned roles. 
To enable generalizable role assignment when collaborating with unseen teammates, we maximize the entropy of the role distribution with a certain constraint. This constraint is formulated as a causal relationship between the role of one agent and information of other agents, represented in a causal graph. Theoretically, we show this constrained objective can be decomposed into two terms: 1) maximizing the mutual information between the role of one agent and the information about other agents to capture the causal effect in role assignment, enabling reasonable role assignment over the corresponding causal graph, and 2) maximizing the heterogeneity of roles in the determinant form to yield more coherent role clusters without redundancy. These two terms can further be converted into intrinsic rewards. Interpreted as the intrinsic reward, CORD can be implemented by extending QMIX \citep{qmix} or REFIL \citep{refil} and trained end-to-end by optimizing the shaped rewards to enable generalizable cooperation across different teams with unseen agents.

Empirically, we evaluate CORD in a variety of environments including resource collection \citep{copa} in multi-agent particle environments (MPE) \citep{mpe} and multi-task StarCraft Multi-Agent Challenge (SMAC) \citep{smac,refil}. Results show that CORD outperforms baselines and achieves better performance in generalization tests. By ablation studies, we verify the effectiveness of our proposed optimization objective.


\section{Related Work}
\label{section: related work}
\subsection{Hierarchical Reinforcement Learning}
Hierarchical RL \citep{al2015hierarchical} solves a complex task by hierarchically decomposing it into simpler sub-tasks. In single-agent settings, the high-level controller selects options \citep{bacon2017option, sutton1999between, precup2000temporal}, reusable skills \citep{daniel2012hierarchical, gregor2016variational, sharma2019dynamics, shankar2020learning} or subgoals \citep{levy2019learning, sukhbaatar2018learning, nachum2018near, dwiel2019hierarchical, nasiriany2019planning} for the low-level policy to solve long-horizon tasks. Recent MARL studies have employed hierarchical frameworks to address team composition problems. For example, COPA \citep{copa} proposes a coach-player framework, where the controller learns a strategy distribution for the low-level agents based on global information, without considering team dynamics. ALMA \citep{iqbalalma} utilizes human domain knowledge to pre-define many subtasks and corresponding rewards for learning a high-level subtask allocation policy that assigns subtasks to agents. HSL \citep{liu2022heterogeneous} employs an auto-encoder model to develop representations under a fixed number of diverse skills, enabling agents to select different skills as needed. Unlike existing work, CORD exploits an informative posterior role distribution to learn the role assignment for the worker agents.

\subsection{Role-Based Reinforcement Learning}

Roles are associated with the division of labor and the key to multi-agent systems \citep{8780531, campbell2011multi}. Based on this intuition, many methods have been proposed to leverage predefined role assignments to solve specified tasks \citep{lhaksmana2018role, sun2020reinforcement}. However, predefined roles need prior knowledge which hurts generalization. To solve this problem, \citet{wilson2010bayesian} learns roles by Bayesian inference. ROMA \citep{roma} distinguishes role distributions based on observation trajectories by mutual information. ROMA does not character more complex inter-agent interactions in environments. RODE \citep{rode} maintains the original framework of HSL while replacing skills with roles. These methods only utilize the learned role distributions to promote cooperation in a fixed team. Unlike these methods, CORD optimizes the role assignment based on the maximum entropy principle to promote generalization across different teams with unseen agents.

\subsection{Multi-Agent Generalization}

Zero-shot coordination has been a widely studied problem in multi-agent systems. It refers to the ability of effectively cooperating with unseen agents. However, previous studies \citep{hu2020other, yu2023improving, lupu2021trajectory} often assume a fixed number of agents and cannot handle scenarios with variable numbers of agents. To achieve robust behaviors among varying numbers of unknown teammates in multi-agent cooperation, generalization problems \citep{stone2010ad, zhang2020multi, mahajan2022generalization} have received much attention. Existing type-inference approaches assume a finite set of predefined teammate types and choose policies adaptively to solve generalization. For example, PLASTIC \citep{barrett2015cooperating} computes the Bayesian posterior of predefined types. AATEAM \citep{chen2020aateam} proposes an attention-based architecture to infer the type. As these methods assume predefined teammate types, they cannot generally generalize to unknown types. Some recent works avoid predefining the type of teammates via complex training processes, such as population-based training \citep{long2020evolutionary}, pre-training \citep{xing2021learning, gu2021online}, and adversarial training \citep{li2019robust}. Other studies leverage communication. SOG \citep{shao2022self} designs different mechanisms for agent self-organization into teams based on the prior that agents with similar observations should communicate. SOG links agent observation trajectories with communication by optimizing a variational lower bound on mutual information. However, these methods require manual design, pre-training, or communication, which brings additional complexity to real-world scenarios. Unlike these methods, CORD addresses the generalization problem without reliance on predefined agent policies/behaviors or communication, and can be trained in an end-to-end manner.



\section{Background}
\label{section: background}
\textbf{Dec-POMDP.} A decentralized partially observable Markov decision process (Dec-POMDP) \citep{oliehoek2016concise} can be defined as a tuple: $ \langle \mathbf{S}, \mathbf{A}, \mathcal{P}, r, \mathbf{U}, \Phi, O, N, \gamma \rangle$, where $N$ is the number of agents, $\mathbf{U}=\{1,2...,N\}$ is the set of agents. $\mathbf{S}$ is the set of the states, and $\mathbf{A}$ is the set of joint actions, $\boldsymbol{a} = \{\boldsymbol{a}^i | i \in \mathbf{U}\}$ $\in \mathbf{A}$. At each state $\mathbf{s} \in \mathbf{S}$, each agent $i$ receives a partial observation $\mathbf{o}^i \in \Phi$ according to the observation function $O(\mathbf{s}, i):\mathbf{S}\times \mathbf{U} \to \Phi$. Agents choose their actions forming a joint action $\boldsymbol{a} \in \mathbf{A}$. The joint action causes a transition to the next state $\mathbf{s}'$ according to the state transition function $\mathcal{P}(\mathbf{s}'|\mathbf{s},\boldsymbol{a})): \mathbf{S} \times \mathbf{A} \times \mathbf{S} \to [0,1]$, and the global reward of the team is determined by $r(\mathbf{s}, \boldsymbol{a})): \mathbf{S} \times \mathbf{A} \to \mathbb{R}$. To settle partial observability, the trajectory of each agent $\boldsymbol{\tau}^i \in \mathbf{T}: (\Phi \times \mathbf{A})^{\ast}$ is used to replace observation $\mathbf{o}^i$. Each agent learns a local policy $\boldsymbol{\pi}^i(\boldsymbol{a}^i|\boldsymbol{\tau}^i)$, and all together form a joint policy $\boldsymbol{\pi}(\boldsymbol{a}|\boldsymbol{\tau})$, where $\boldsymbol{\tau}$ is the joint trajectory of all agents. The objective is to learn a joint policy to maximize the expected cumulative discounted return, $\mathbb{E}[\sum_{t=0}^\infty \gamma^t r_t]$, where $\gamma \in [0, 1)$ is the discount factor. For a joint policy $\boldsymbol{\pi}(\boldsymbol{a}|\boldsymbol{\tau})$, we can define the joint state-action value function $Q^{tot}(\boldsymbol{\tau}_t, \boldsymbol{a}_t) = \mathbb{E}_{\boldsymbol{\tau}_{t+1:\infty}, \boldsymbol{a}_{t+1:\infty}}[\sum_{k=0}^{\infty}\gamma^{t}r_{t+k}|\boldsymbol{\tau}_t, \boldsymbol{a}_t]$. 
Further we denote all other agents except agent $i$ as $-i$.

\textbf{QMIX.} Value decomposition methods \citep{sunehag2018value,qmix,son2019qtran,wang2020qplex} factorize the joint state-action value function $Q^{tot}$ into individual state-action functions to solve Dec-POMDPs. QMIX \citep{qmix} is one of the commonly used methods, and it factorizes $Q^{tot}$ into $\{Q^i(\boldsymbol{\tau}^i, \boldsymbol{a}^i) | i \in \mathbf{U}\}$ via a mixing network which satisfies the IGM condition by $\frac{\partial Q^{tot}}{\partial Q^i} \ge 0, \ \forall I \in \mathbf{U}$. The mixing network can be removed during execution such that agents can make decisions by their own $Q^i$ in a fully decentralized manner. To handle a variable number of agents, Attention QMIX (AQMIX) \citep{refil} improves QMIX by the multi-head attention mechanism~\citep{vaswani2017attention}. REFIL \citep{refil} is further introduced to enhance AQMIX for multi-task MARL by random subgroup partition.  


\section{CORD}
\label{section: method}

Inspired by information theory \citep{jaynes1957information}, we believe that increasing the entropy of the role distribution is an effective approach to resolve the problem of generalizable cooperation with unseen agents. The objective function can be considered as,
\begin{align}\label{eq3}
    \max \mathcal{H}(\mathcal{P(\mathbf{c})}),
\end{align}
where $\mathcal{P(\mathbf{c})}$ is a prior role distribution. Without any task information, the best prior distribution would be a max entropy distribution. When the task information is given, we can further adapt this prior based on the task information and get a more informative posterior. Therefore, we conjecture that merely maximizing the entropy of the role distribution is insufficient for learning high-quality role division. Our empirical results (see Section \ref{sec_ablation}) also support that.

To derive an informative posterior role distribution by maximizing the entropy, some evidence or information should be provided to the posterior. \textit{We believe that a good role assignment for cooperation should consider the information or influence from other agents.} In a role-based approach, effective role assignment is crucial for fostering cooperation. As highlighted by \citet{chalkiadakis2022computational, elkind2016cooperative, branzei2008models}, considering the impact of individual agents, such as their Shapley value, is key to enhancing team cooperation. An illustrative example: A robot team of excavators, material transporters, and assemblers collaborate to build, where delayed excavation due to hard rocks prompts some transporters to switch roles to clear debris, accelerating the process. Thus, effective role assignment is responsive to the excavators' performance.


Thus, we assume the presence of a causal relationship between the role of one agent and the information of other agents. Moreover, we construct a causal graph to reflect such relationships. With these conditions and the corresponding causal graph, we can find that the entropy of role distribution \eqref{eq3} can be split into two terms: the first term is causal inference in role and the second term is role heterogeneity. We discuss this result in detail later in this section. 

Before our discussion of the decomposition of the entropy of role distribution, we must first establish a reasonable causal graph. We assume that a causal relationship exists between the role of one agent and the influence of other agents. As such, to ascertain the precise impact of these relationships on the role distribution, we must quantify them in terms of mutual information based on the corresponding causal graph. For this purpose, we put forth the assumptions below to facilitate the construction of the causal graph.

\begin{assumption}[\textbf{Causal Graph Assumption}]\label{a2}
    At timestep $t$, the actions $\boldsymbol{a}_{t}^{-i}$ and other agents' roles $\mathbf{c}_{t}^{-i}$ cannot influence agent $i$'s role $\mathbf{c}_{t}^i$. Thus, we consider $\boldsymbol{a}_{t-1}^{-i}$ and $\mathbf{c}_{t-1}^{-i}$, which we posit having a causal relationship with $\mathbf{c}_{t}^i$. Given that we assume environments satisfy Markov Properties, we need only consider the observation at timestep $t$, denoted $\mathbf{o}_t^i$, which encapsulates all prior information.
\end{assumption}

\begin{figure}[t!]
\centering
\includegraphics[width=0.98\linewidth]{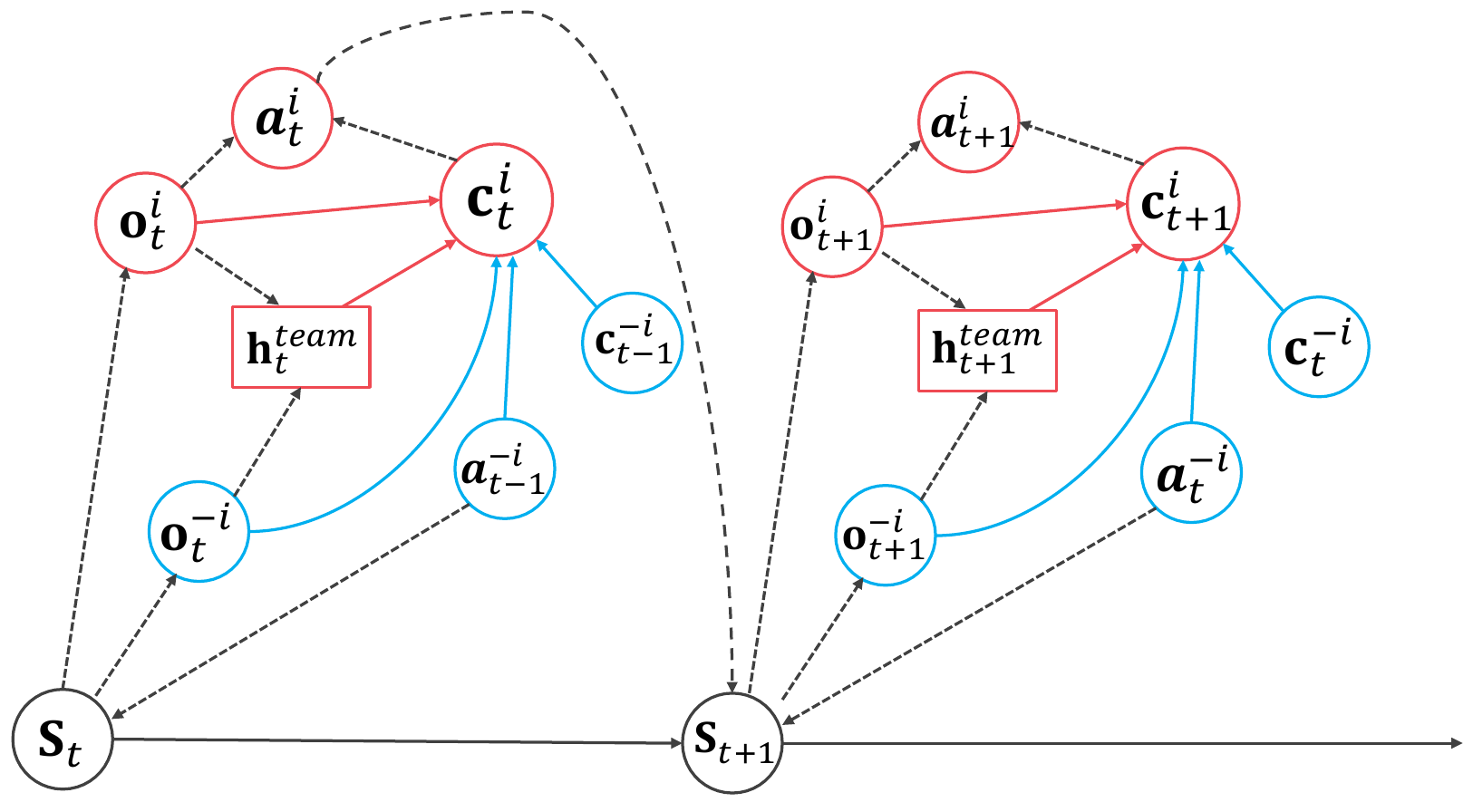}
\caption{Illustration of the causal graph. Black circles are inherent states in the environment. Blue circles represent related information of other agents except for agent $i$. Red circles are related information of agent $i$. Black dashed lines represent the direction of information transmission. Blue solid lines represent the causal effect on the role of agent $i$ and red solid lines are the individual information.}
\label{fig2:causalgraph}
\end{figure}

Under Assumption \ref{a2}, we propose a causal graph as illustrated in Figure~\ref{fig2:causalgraph}. In this causal graph, for any agent $i$, the observation of agent $i$ at timestep $t$: $\mathbf{o}_t^i$ and team state $\mathbf{h}^{team}_t=\{\mathbf{o}^k_t|k \in \mathbf{U}\}$ can represent the sufficient information of agent $i$. The observation of other agents at timestep $t$: $\mathbf{o}_t^{-i}$, the action and role distribution of other agents at timestep $t-1$: $\boldsymbol{a}_{t-1}^{-i}$ and $\mathbf{c}_{t-1}^{-i}$ can represent the causal influence on agent $i$.

Now, given this causal graph, we can define some critical quantities for our analysis. First, we can get the following definition about the other agents' influence.

\begin{definition}\label{d1}
    Suppose that the causal relationships between agents can be defined as the causal graph, for each agent $i$, the definition of other agents' influence vector $\boldsymbol{\bar{I}}_t^i$ is below:
    \begin{align}
    \begin{split}
        \mathbf{q}_t^i &= f(\mathbf{o}_t^i, \mathbf{h}_t^{team}),\\ 
        \mathbf{k}_t^{i} &= g(\boldsymbol{a}_{t-1}^{i}, \mathbf{o}_t^{i}, \mathbf{c}_{t-1}^{i}),\\ 
        \mathbf{v}_t^{i}&=h(\boldsymbol{a}_{t-1}^{i}, \mathbf{o}_t^{i}, \mathbf{c}_{t-1}^{i}), \\ 
        \alpha^{j}_t &= \mathrm{softmax}( \frac{\mathbf{q}_t^i \cdot \mathbf{k}_t^{j}}{\sqrt{d}}), \forall j \neq i \\ 
        \boldsymbol{\bar{I}}_t^i &=\sum_{j \neq i} \alpha^j_t \mathbf{v}_t^j,
    \end{split}
    \end{align}
where $\mathbf{q}_t^i$ is encoded by $f$ function, $\mathbf{k}_t^{i}$ and $\mathbf{v}_t^{i}$ are the key and value vector encoded by $g$ and $h$ function respectively, $f$, $g$, and $h$ are trainable network layers with different parameters. 
\end{definition}

Here we follow the attention mechanism \citep{bahdanau2014neural} to process the information about the influence of other agents $-i$ on agent $i$. Thus, Definition \ref{d1} represents the specific computation process of defining the influence vector of an agent by the attention mechanism.

Next, we can define $\mathbf{c}$-related matrix $A(\mathbf{c})$ which represents the similarities between different roles.

\begin{definition}\label{d2}
    There are a multivariate role distribution $\mathcal{P(\mathbf{c})}$ and $\mathbf{c}$-related Matrix $A(\mathbf{c})$. Their definitions are as follows:
    \begin{align}
        \notag A(\mathbf{c})_{ij} &= e^{-d_{ij}}, \ A(\mathbf{c})_{ij} \in (0, 1], \ \forall i, j \in \{1,2,..., N\}, \\ 
        \notag d_{ij} &= D_{\mathrm{KL}}(\mathcal{P}(\mathbf{c}_i|\boldsymbol{\bar{I}}_t^i, \mathbf{q}_t^i)||\mathcal{P}(\mathbf{c}_j|\boldsymbol{\bar{I}}_t^j, \mathbf{q}_t^j))\\ 
        &\quad +D_{\mathrm{KL}}(\mathcal{P}(\mathbf{c}_j|\boldsymbol{\bar{I}}_t^j,
        \mathbf{q}_t^j))||\mathcal{P}(\mathbf{c}_i|\boldsymbol{\bar{I}}_t^i, \mathbf{q}_t^i),
    \end{align}
where $\boldsymbol{\bar{I}}_t^i$, $\mathbf{q}_t^i$, $\boldsymbol{\bar{I}}_t^j$, and $\mathbf{q}_t^j$ satisfy Definition~\ref{d1}, and $N$ is the number of agents. 
\end{definition}

With all the preparation above, we can discuss the decomposition of the role entropy. Actually, we have the following theorem.

\begin{theorem}\label{t1} 
    Suppose that both the prior role distribution $\mathcal{P(\mathbf{c}|\mathbf{q})}$ and the posterior role distribution $\mathcal{P}(\mathbf{c}|\boldsymbol{\bar{I}}, \mathbf{q})$ obey Gaussian distribution and the $\mathbf{c}$-related matrix $A(\mathbf{c})$ satisfies Definition~\ref{d2}, then the entropy of the role distribution can be decomposed as:
    \begin{align}
        & \mathcal{H}(\mathcal{P(\mathbf{c}|\mathbf{q})}) = \mathcal{I}(\mathbf{c};\boldsymbol{\bar{I}}|\mathbf{q}) + \mathcal{H}(\mathcal{P}(\mathbf{c}|\boldsymbol{\bar{I}}, \mathbf{q})), \label{f7} \\
        & \mathcal{I}(\mathbf{c};\boldsymbol{\bar{I}}|\mathbf{q}) = \mathbb{E}_{\boldsymbol{\tau}}[\sum^{N}_{i=1}D_{\mathrm{KL}}[\mathcal{P}(\mathbf{c}_i|\boldsymbol{\bar{I}}_i, \mathbf{q}_i)\|\mathcal{P}(\mathbf{c}_i|\boldsymbol{do(\bar{I}}_i), \mathbf{q}_i)]], \label{eq:causal} \\
        & \mathcal{H}(\mathcal{P}(\mathbf{c}|\boldsymbol{\bar{I}}, \mathbf{q})) = \beta \log |A(\mathbf{c})| + C, \label{eq:role-H}
    \end{align}
where $\mathcal{I}$ is mutual information, $\boldsymbol{\bar{I}}_i = \boldsymbol{\bar{I}}_t^i$, $\mathbf{q}_i = \mathbf{q}_t^i$, $\boldsymbol{do(\bar{I}}_i) = \boldsymbol{do(\bar{I}}_t^i)$, $|A(\mathbf{c})|$ denotes the determinant of $A(\mathbf{c})$, and $\beta$ and $C$ are constants.
\begin{proof}
See Appendix~\ref{SPI_proof}.
\end{proof}
\end{theorem}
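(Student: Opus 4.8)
The plan is to establish the three displayed equalities separately, with \eqref{f7} serving as the structural identity and \eqref{eq:causal}--\eqref{eq:role-H} identifying its two summands. Equation \eqref{f7} I would obtain directly from the chain rule linking conditional entropy and conditional mutual information: by definition $\MI(\rolejt;\influjt|\queryjt)=\Ent(\margjt)-\Ent(\postjt)$, which rearranges to the claim. This step is purely definitional and requires neither the Gaussian hypothesis nor the causal graph; all of the real content lies in recasting the two resulting terms.

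For the causal term \eqref{eq:causal} I would first expand the conditional mutual information as an expected log-ratio, $\MI(\rolejt;\influjt|\queryjt)=\E\!\left[\log\tfrac{\postjt}{\margjt}\right]$, and then absorb the expectation over $\rolejt$ into a KL divergence to get $\E_{\influjt,\queryjt}\!\left[\kl(\postjt\,\|\,\margjt)\right]$. The causal graph of Assumption~\ref{a2} now enters: because $\influi$ reaches $\rolei$ only through the single direct edge and $\queryi$ blocks any back-door path, intervening with $\DO(\influi)$ severs that edge and leaves the role distribution governed by the query alone, so $\margdoi=\margi$. Substituting this identity replaces the reference distribution by the interventional one; the per-agent sum then follows by factorizing the joint posterior across agents (each $\rolei$ depending only on its own $\influi,\queryi$), which makes the joint KL additive and collapses the outer expectation to $\E_{\taujt}[\sum_{i}\cdot]$.

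The heterogeneity term \eqref{eq:role-H} is where the Gaussian assumption is essential, and I expect it to be the main obstacle. The strategy is that a jointly Gaussian posterior has differential entropy $\tfrac12\log|\Sigma|+\text{const}$, so it suffices to identify the governing covariance matrix with $A(\rolejt)$ up to an affine change. Carrying out the routine Gaussian-KL computation for the symmetrized divergence $d_{ij}$ between the per-agent posteriors of agents $i$ and $j$ under matched covariances, the log-determinant and trace contributions cancel and $d_{ij}$ reduces to a squared Mahalanobis distance between the posterior means; hence $A(\rolejt)_{ij}=e^{-d_{ij}}$ is exactly a Gaussian (RBF) kernel evaluated on those means. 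Such a kernel matrix is positive semidefinite, so it can be read as the covariance of a Gaussian over the $N$ roles, and the Gaussian entropy formula then delivers $\Ent(\postjt)=\beta\log|A(\rolejt)|+C$ with $\beta=\tfrac12$ and $C$ collecting the $(2\pi e)$ normalization.

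The two points I would take care to justify rather than assert are (i) the do-calculus step $\margdoi=\margi$, which hinges on the absence of confounding between $\influi$ and $\rolei$ in Figure~\ref{fig2:causalgraph} and should be presented as an application of the back-door criterion; and (ii) the identification of the joint-posterior covariance with the pairwise-similarity matrix $A(\rolejt)$, which is the step that genuinely exploits the determinantal structure of Gaussians. For (ii) I would state explicitly the modeling assumption (equal or matched per-agent covariances) under which $d_{ij}$ collapses to a clean quadratic and $A(\rolejt)$ is guaranteed positive semidefinite, so that $\log|A(\rolejt)|$ is legitimately an entropy up to the additive constant $C$.
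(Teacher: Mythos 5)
Your proposal follows essentially the same route as the paper's proof: the chain rule of conditional mutual information for \eqref{f7}, factorization of the joint posterior across agents together with the interpretation of $\boldsymbol{do}(\boldsymbol{\bar{I}}_i)$ as the average over interventions (so that $\mathcal{P}(\mathbf{c}_i|\boldsymbol{do}(\boldsymbol{\bar{I}}_i),\mathbf{q}_i)$ coincides with the marginal $\mathcal{P}(\mathbf{c}_i|\mathbf{q}_i)$) for \eqref{eq:causal}, and the Gaussian entropy formula with $A(\mathbf{c})$ identified as the posterior covariance for \eqref{eq:role-H}. If anything you are more explicit than the paper at the two delicate points you flag---the no-confounding justification for the do-step and the positive-semidefiniteness of the kernel matrix $A(\mathbf{c})$, which the paper simply asserts as a valid Gaussian-process covariance.
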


$\boldsymbol{do(\bar{I}}_i)$ is a mathematical operator which represents the average influence from other agents, so $D_{\mathrm{KL}}[\mathcal{P}(\mathbf{c}_i|\boldsymbol{\bar{I}}_i, \mathbf{q}_i)\|\mathcal{P}(\mathbf{c}_i|\boldsymbol{do(\bar{I}}_i), \mathbf{q}_i)]$ can quantify counterfactual causal effects from other agents. The difference between ``do-calculus'' and $\boldsymbol{do(\bar{I}}_i)$ is that ``do-calculus'' is the operation of intervention, while $\boldsymbol{do(\bar{I}}_i)$ refers to the expectation of all possible interventions. Practically, $\boldsymbol{do(\bar{I}}_i)$ can be substituted with a constant vector. More discussion about $\boldsymbol{do(\bar{I}}_i)$ can be found in \citet{pearl2009causal}. However, instead of using the method in \citet{pearl2009causal} to calculate the causal effect, we borrow the idea of social influence \citep{jaques2019social} and leverage the deep learning model to estimate the causal effect.

\begin{figure*}[t]
\centering
\includegraphics[width=0.68\linewidth]{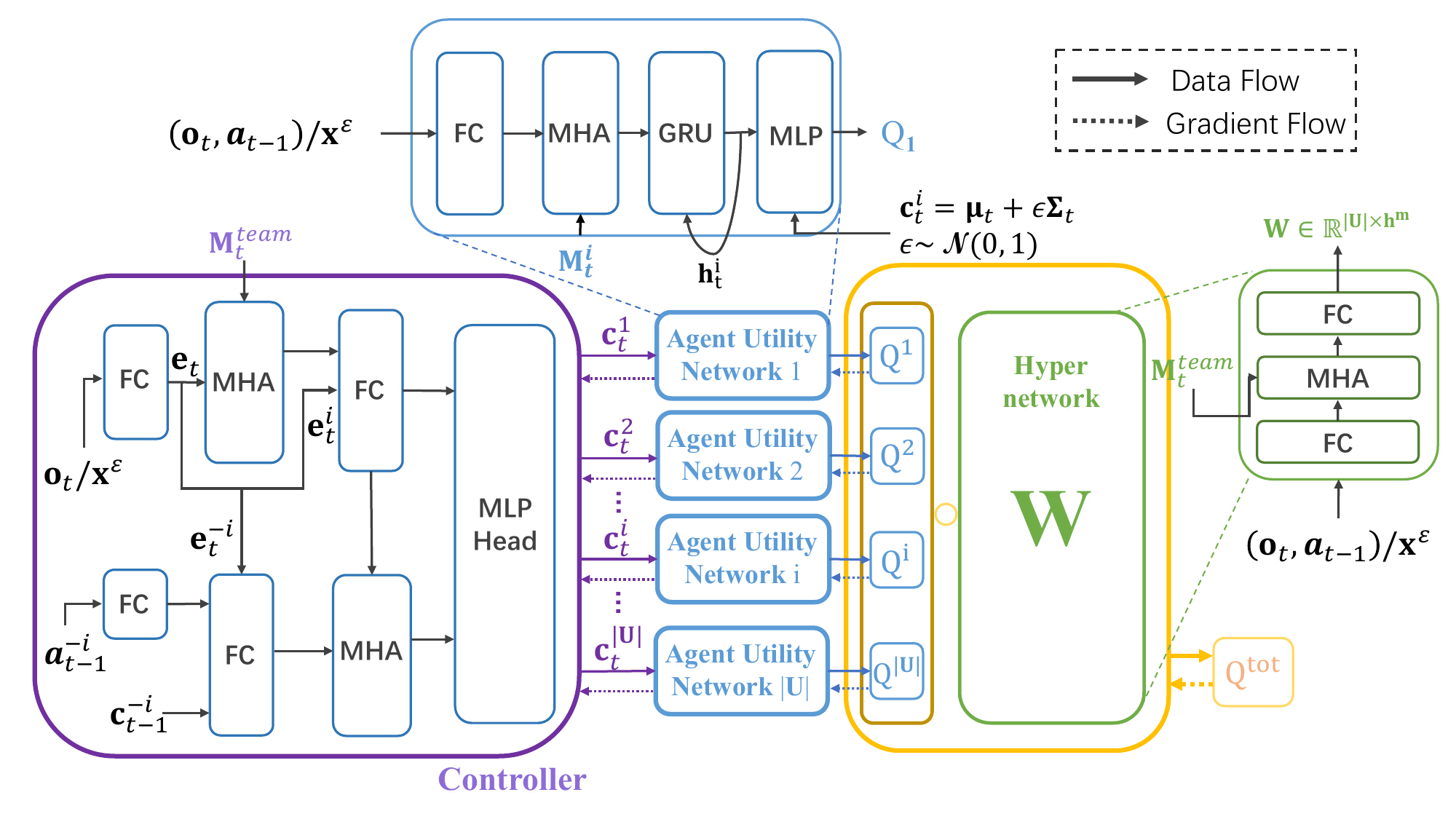}
\caption{Overview of CORD framework. The purple module is the high-level controller network. The blue modules represent agent individual Q-networks and the orange module is the mixing network. To additionally settle entities-based settings, for the network input, $(\mathbf{o}_t, \boldsymbol{a}_{t-1})/\mathbf{x}^\mathcal{E}$ represents different types of observation for different entities, and $\mathbf{M}^{team}_t$ and $\mathbf{M}_t^i$ are masks of team state and observation of agent $i$ respectively.}
\label{fig3:framwork}
\end{figure*}

From Theorem \ref{t1}, we can find that the entropy of role distribution is split into two terms: \textit{causal inference in role} $\mathcal{I}(\mathbf{c};\boldsymbol{\bar{I}}|\mathbf{q})$ and \textit{role heterogeneity} $\mathcal{H}(\mathcal{P}(\mathbf{c}|\boldsymbol{\bar{I}}, \mathbf{q}))$. The causal inference of roles $\mathcal{I}(\mathbf{c};\boldsymbol{\bar{I}}|\mathbf{q})$  enables prudent role assignments through causal reasoning based on the previously defined causal graph. The role heterogeneity $\mathcal{H}(\mathcal{P}(\mathbf{c}|\boldsymbol{\bar{I}}, \mathbf{q}))$ incentivizes the controller to derive dissimilar role partitions without undesirable redundancy. Thus, the model can maintain its performance while enhancing generalization by concurrently optimizing both components.

We need to argue that the objective \eqref{f7} is different from \eqref{eq3}. The decomposition in Theorem \ref{t1} is under Definition \ref{d1} and the causal graph. These assumptions or conditions can be seen as constraints on optimizing the entropy of the role distribution in \eqref{eq3}. So optimizing the objective \eqref{f7} is actually optimizing \eqref{eq3} with some constraints from the causal graph.

Though $\mathcal{I}(\mathbf{c};\boldsymbol{\bar{I}}|\mathbf{q})$ and $\mathcal{H}(\mathcal{P}(\mathbf{c}|\boldsymbol{\bar{I}}, \mathbf{q}))$ are components of the decomposition of $\mathcal{H}(\mathcal{P(\mathbf{c})})$ from Theorem \ref{t1}, these two terms still have richer meanings from different views. Actually, these different views are the reason that we name them as causal inference in role and role heterogeneity.  So, we discuss these two terms in further depth next.

\subsection{Causal Inference in Role}

To enable prudent role assignments, the precise causal impact based on the previously defined causal graph must be ascertained. Actually, the RHS of \eqref{eq:causal} represents the expected causal effects from other agents \citep{pearl2009causal}. The proof demonstrating how causal inference relates to mutual information is provided in Appendix~\ref{SPI_proof}. So for the purpose of prudent role assignments, assuming the role distribution satisfies Definition~\ref{d1}, the objective function can be defined as,
\begin{align}\label{f8}
        \max_{\mathbf{c} \sim \mathcal{P}(\cdot|\boldsymbol{\bar{I}}, \mathbf{q}) } \mathbb{E}_{\boldsymbol{\tau}}[\sum^{N}_{i=1}D_{\mathrm{KL}}[\mathcal{P}(\mathbf{c}_i|\boldsymbol{\bar{I}}_i, \mathbf{q}_i)||\mathcal{P}(\mathbf{c}_i|\boldsymbol{do(\bar{I}}_i), \mathbf{q}_i)]].
\end{align}
Moreover, from Theorem \ref{t1}, maximizing \eqref{f8} is also equivalent to maximizing the mutual information between the role and the influence vector of other agents. Note that mutual information can be used in causal inference, including counterfactual inference and intervention operations. \citet{NEURIPS2022_98a5c047} explains that mutual information discovers causal relationships by observation data and evaluates the impact of a variable change on another variable.

Although \eqref{f8} optimizes the expected value of causal effects, practically we can sample $M$ trajectories and approximate the mutual information between $\mathbf{c}_i$ and $\boldsymbol{\bar{I}}_t^i$ by the average causal effects. Moreover, we substitute $\boldsymbol{I}_0$, a constant vector, for $\boldsymbol{do}(\boldsymbol{\bar{I}}^i_t)$ to represent the intervention in causal inference theory \citep{pearl2009causal}. Therefore, the causal inference in role can be rewritten as follows, and we maximize it by taking it as an intrinsic reward,
\begin{align}\label{r_c}
    r_c = \frac{1}{M}\sum_{M}\sum^{N}_{i=1}D_{\mathrm{KL}}[\mathcal{P}(\mathbf{c}_i|\boldsymbol{\bar{I}}_i, \mathbf{q}_i)||\mathcal{P}(\mathbf{c}_i|\boldsymbol{I}_0, \mathbf{q}_i)],
\end{align}
where $\boldsymbol{I}_0$ is a constant vector, $M$ is the number of sampled trajectories.

\begin{figure*}[t!]
\centering
\begin{subfigure}{0.3\textwidth}
    \centering
    \includegraphics[width=.956\linewidth]{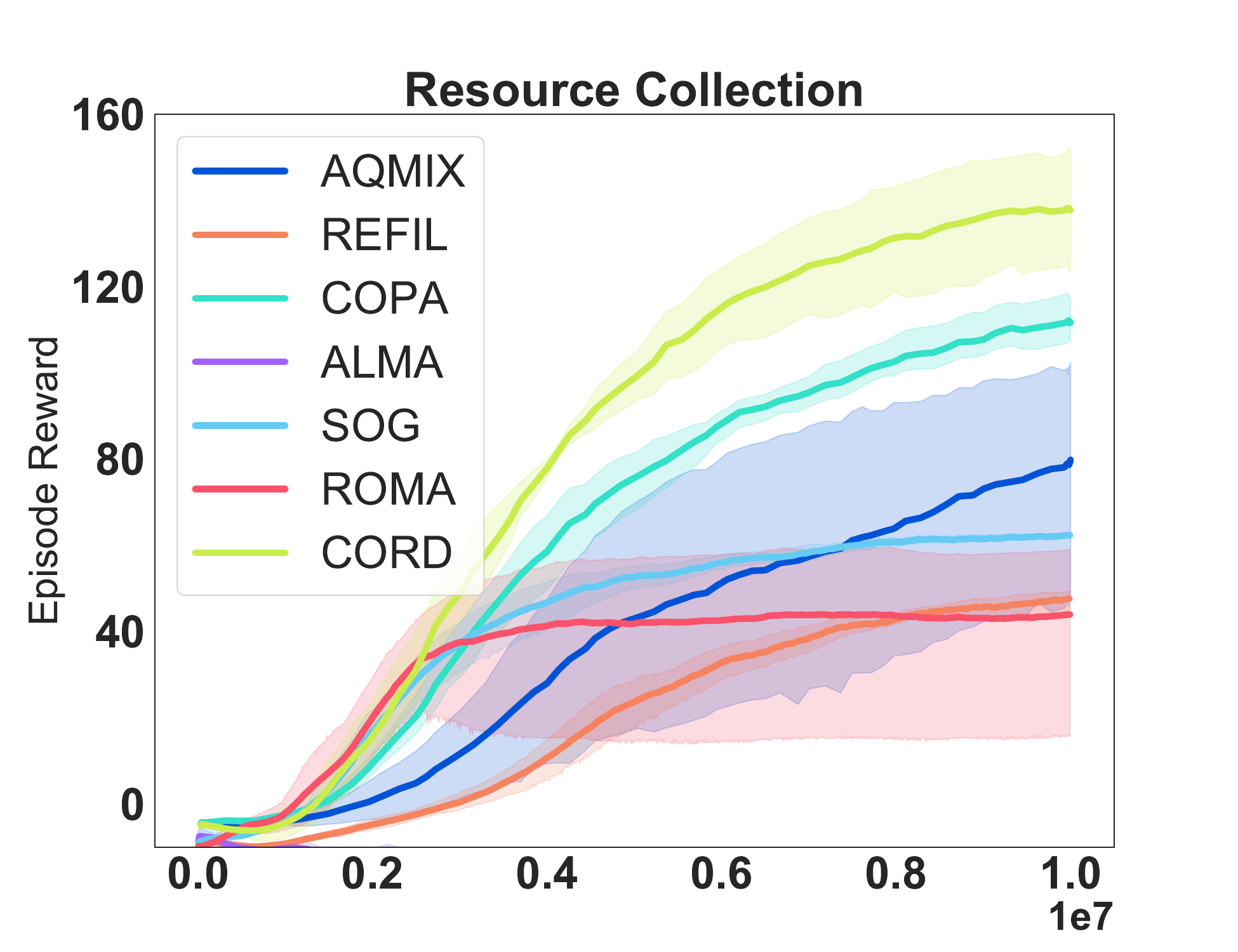}
    \caption{learning curves}
    \label{Fig4.sub.1} 
\end{subfigure}
\quad
\begin{subfigure}{0.3\textwidth}
    \centering
    \includegraphics[width=.96\linewidth]{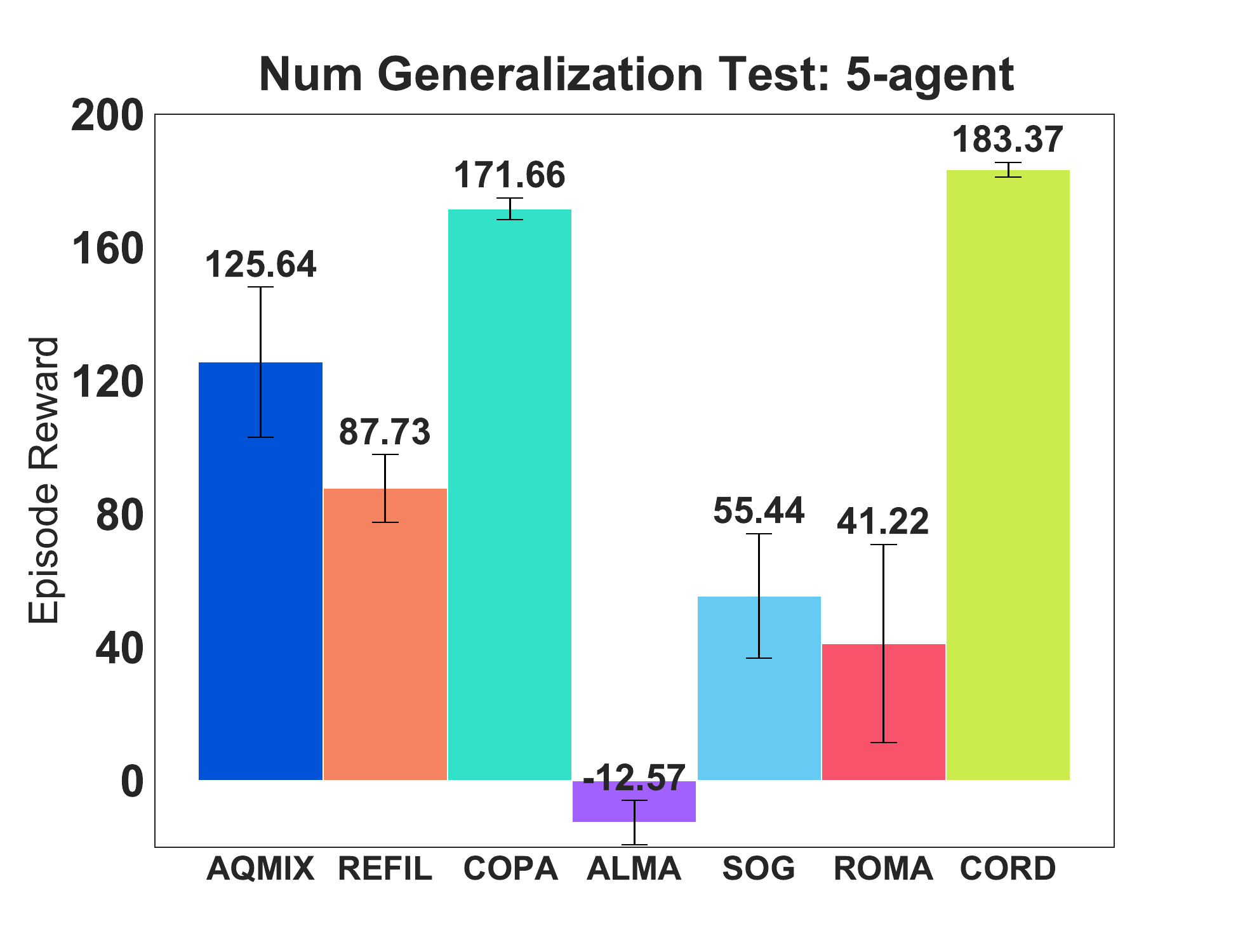}
    \caption{5-agent task}
    \label{Fig4.sub.2} 
\end{subfigure}
\quad
\begin{subfigure}{0.3\textwidth}
    \centering
    \includegraphics[width=.99\linewidth]{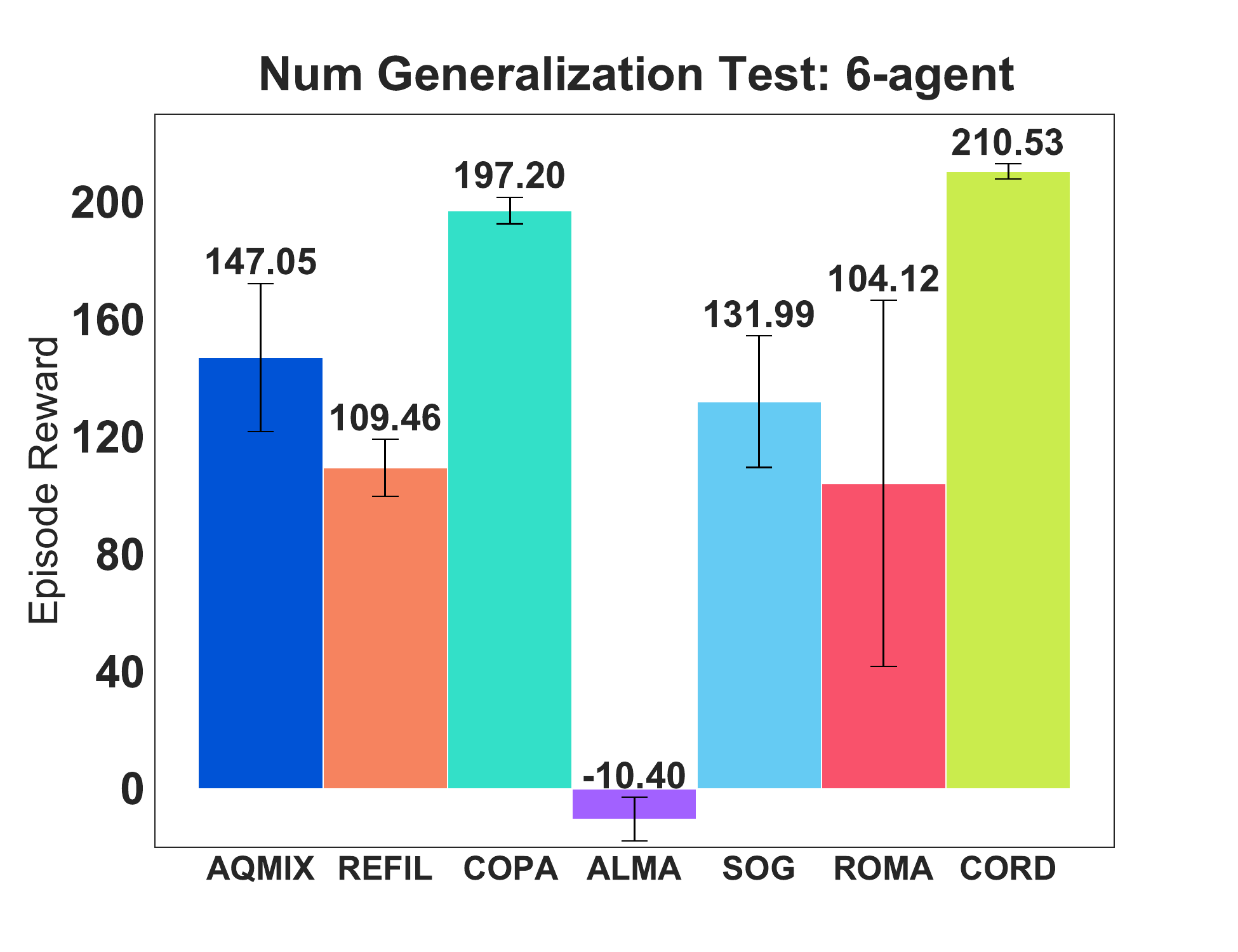}
    \caption{6-agent task}
    \label{Fig4.sub.3}
\end{subfigure}
\caption{Episode rewards on training and generalization to \textit{\textbf{unseen teams}} of CORD compared with all baselines in resource collection: \subref{Fig4.sub.1} learning curves on training tasks, \subref{Fig4.sub.2} generalization to 5-agent team, and \subref{Fig4.sub.3} generalization to 6-agent team.}
\label{Fig4.resource}
\end{figure*}

\subsection{Role Heterogeneity}
Given Definition \ref{d2} and the assumption of Gaussian distribution, we can find the determinant $|A(\mathbf{c})|$ in the formulation of $\mathcal{H}(\mathcal{P}(\mathbf{c}|\boldsymbol{\bar{I}}, \mathbf{q}))$ in \eqref{eq:role-H}. On the other hand, the matrix $A(\mathbf{c})$ depicts the distance between any pair of roles. This means that the determinant $|A(\mathbf{c})|$ represents the enclosed volume of $\mathbf{c}$ in the corresponding metric space, and maximizing $|A(\mathbf{c})|$ improves the diversity of roles \citep{parker2020effective}. So the following objective function is equivalent to maximizing the entropy of posterior role distributions given other agents' influence vectors, as well as improving the diversity of roles,
\begin{align}\label{f10}
    \max_{\mathbf{c} \sim \mathcal{P}(\cdot|\boldsymbol{\bar{I}}, \mathbf{q})} \beta\log |A(\mathbf{c})| + C. 
\end{align}
However, it should be noted that $\log |A(\mathbf{c})| \in (-\infty, 0]$ is unbounded at one end, while $|A(\mathbf{c})| \in [0, 1]$ is bounded. Since the $\log$ function is concave, we optimize $\log |A(\mathbf{c})|$ in the same manner as we optimize $|A(\mathbf{c})|$. Thus, the role heterogeneity is expressed as follows and maximized as an intrinsic reward,
\begin{align}\label{r_d}
    r_d = |A(\mathbf{c})|.
\end{align}

\subsection{Framework}
\label{frameworkSection}

We are now ready to introduce our learning framework. First, by aggregating the two intrinsic rewards $r_c$ and $r_d$ with the environmental reward $r_e$, we can have a new reward function $r$ as follows:
\begin{align}\label{newReward}
    r = r_e + \lambda_c r_c + \lambda_d r_d,
\end{align}
where $\lambda_c$ and $\lambda_d$ are hyperparameters. Our objective is to optimize this shaped reward. The learning framework of CORD is illustrated in Figure \ref{fig3:framwork}. The high-level controller takes as input the observations from all agents and then assigns roles to the low-level agents accordingly. 
The agent utility network computes individual Q-function given local observation and assigned role. The mixing network takes as input the Q-values from all agents and outputs $Q^{tot}$. All the modules, parameterized by $\theta$, are updated end-to-end via backpropagation to minimize the TD loss, 
\begin{align}
\begin{split}
    \mathcal{L}(\theta) &= \mathbb{E}_{(\boldsymbol{\tau},\boldsymbol{a}),r,\boldsymbol{\tau}')\sim \mathcal{D}} \left[\left(y^{tot}-Q^{tot}(\boldsymbol{\tau}, \boldsymbol{a}); \theta)\right)^2 \right],\\
    y^{tot} &= r + \gamma Q^{tot}\left(\boldsymbol{\tau}', \arg\max Q^{tot}(\boldsymbol{\tau}',\cdot \ ; \theta); {\bar \theta} \right),
\end{split}
\label{eq2}
\end{align}
where $\mathcal{D}$ is the replay buffer and $\bar \theta$ is the parameter of the target network. The pseudocode of the learning algorithm is available in Appendix~\ref{app:algorithm}.

\begin{figure*}[t!]
\centering
\begin{subfigure}{0.228\textwidth}
	\includegraphics[width=\linewidth]{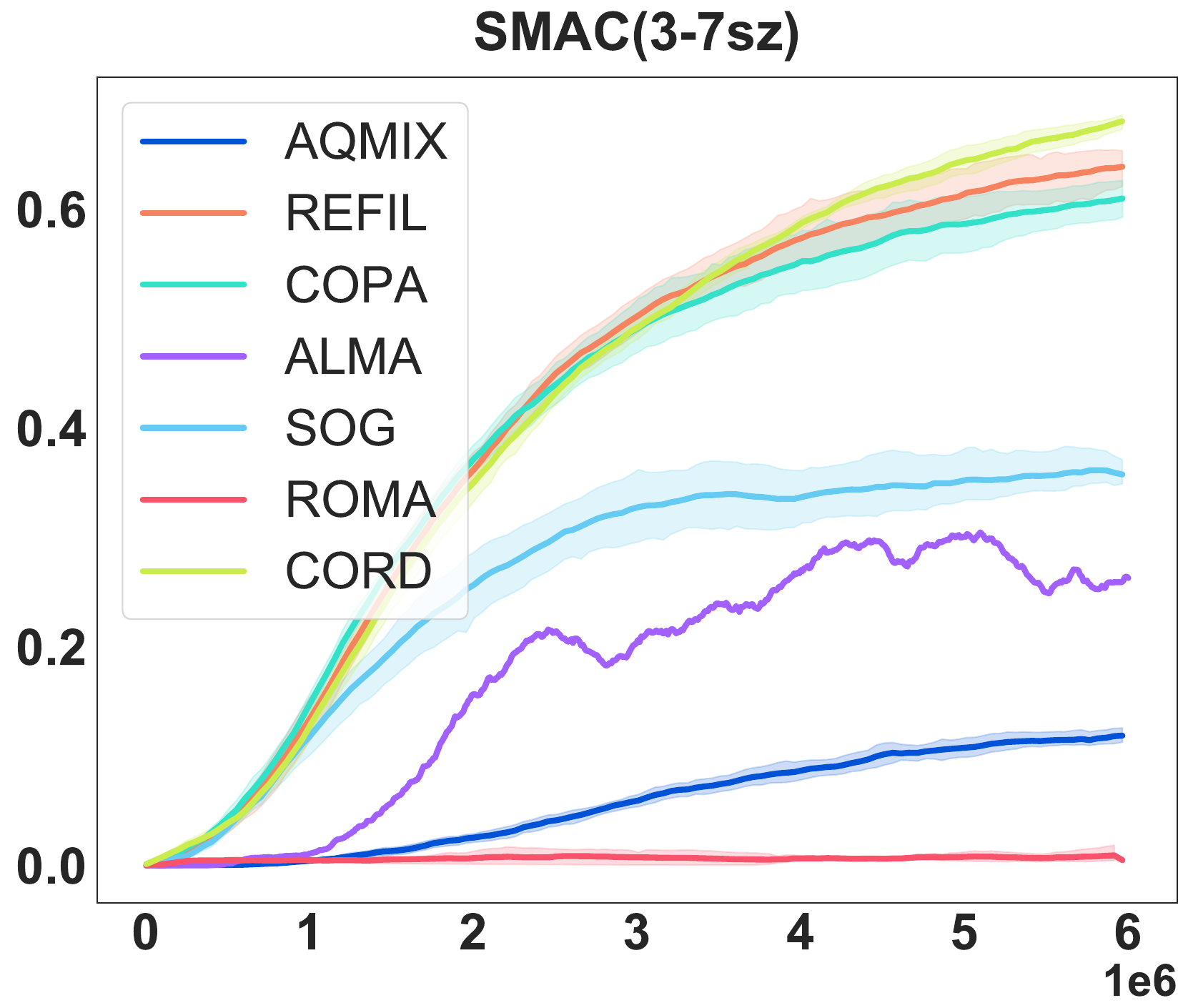}
    \caption{3-7sz}\label{Fig5.sub.1}
\end{subfigure}
\quad
\begin{subfigure}{0.228\textwidth}
	\includegraphics[width=\linewidth]{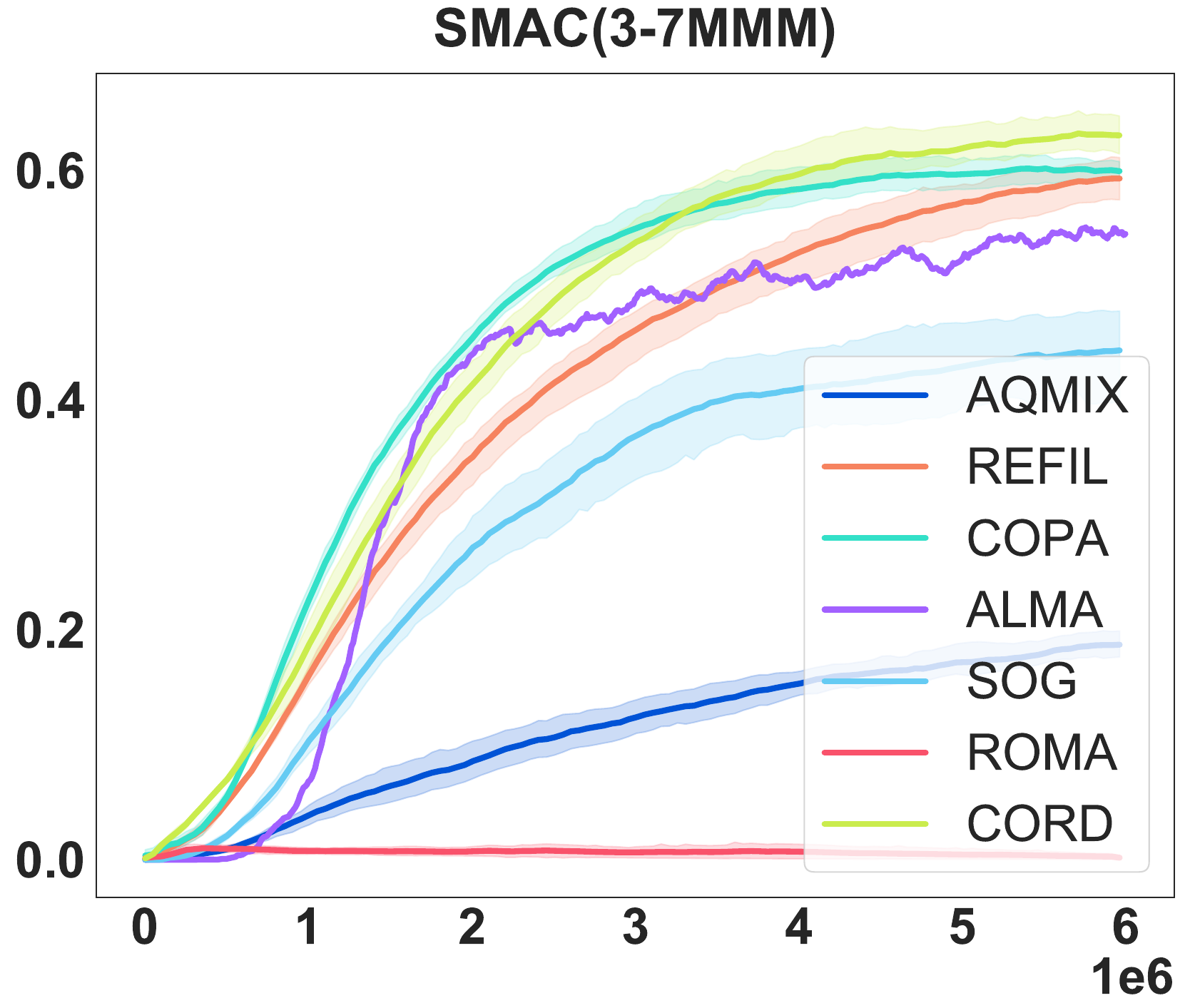}
    
    \caption{3-7MMM}
    \label{Fig5.sub.2}
\end{subfigure}
\quad
\begin{subfigure}{0.228\textwidth}
    \centering
	\includegraphics[width=\linewidth]{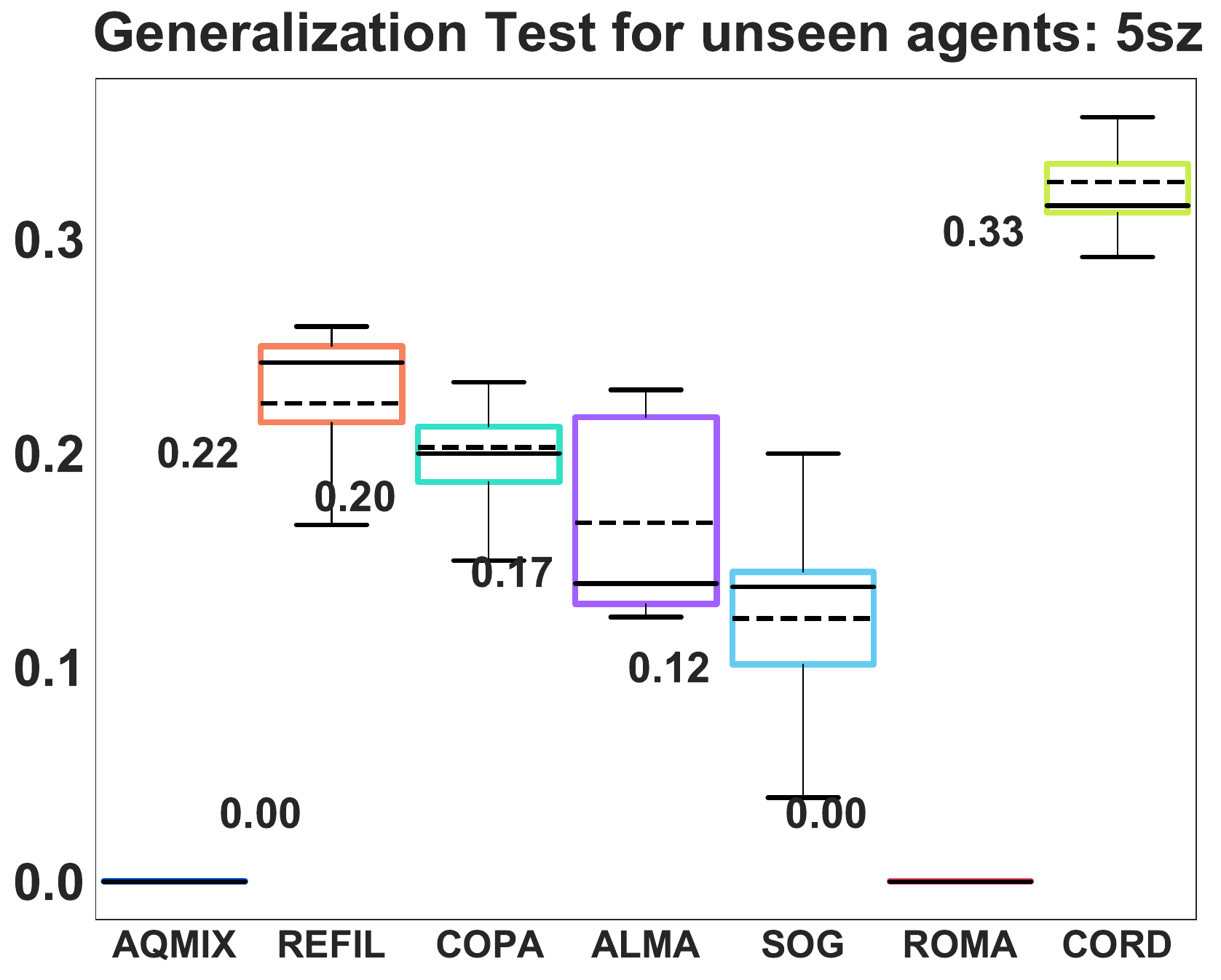}
    \captionsetup{skip=10pt}
    \caption{5sz}\label{Fig5.sub.3}
\end{subfigure}
\quad
\begin{subfigure}{0.238\textwidth}
    \centering
	\includegraphics[width=\linewidth]{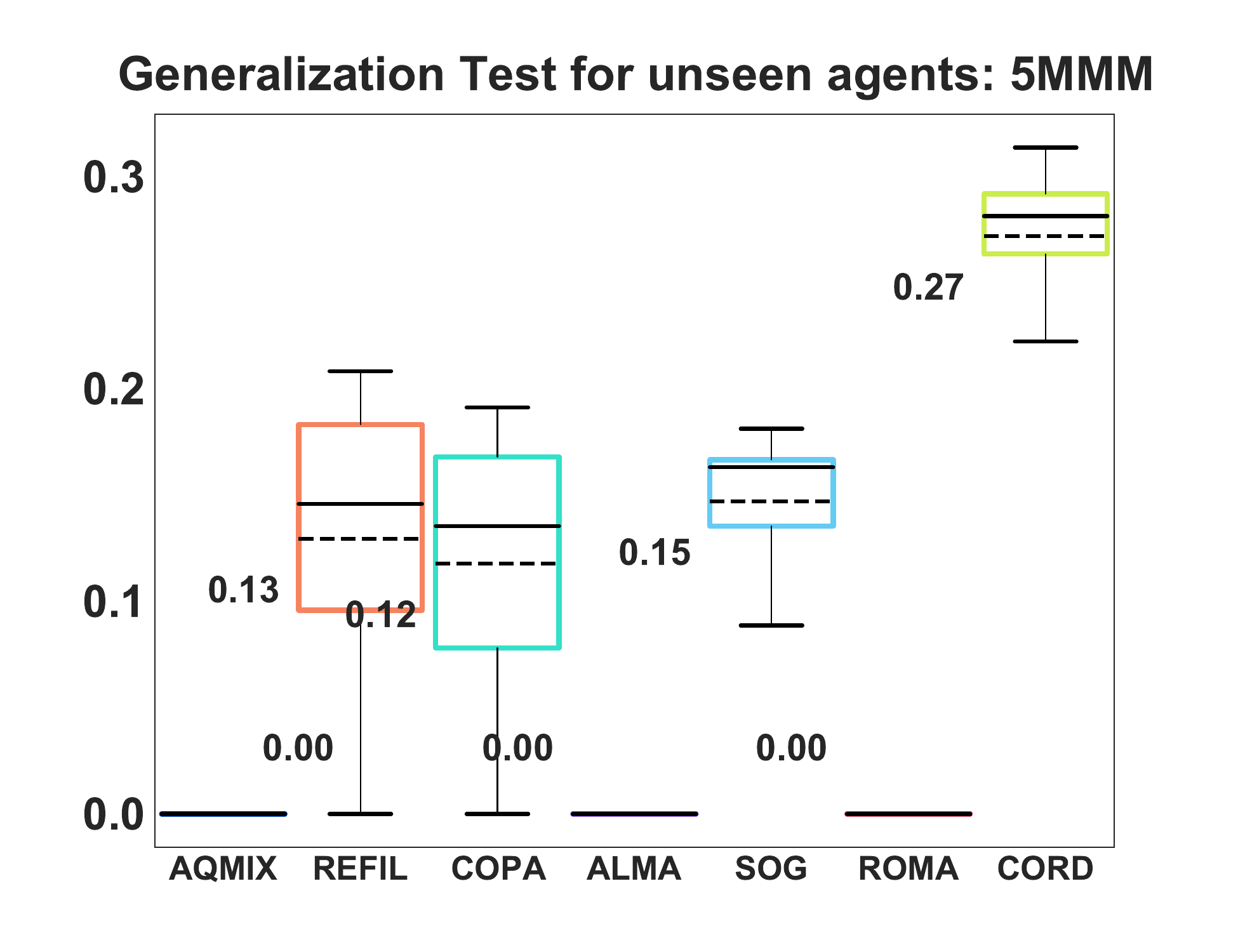}
    \captionsetup{skip=10pt}
    \caption{5MMM}\label{Fig5.sub.4}
\end{subfigure}
\caption{
Win rates on training and generalization to \textit{\textbf{unseen agents}} of CORD compared with all baselines in SMAC: \subref{Fig5.sub.1} learning curves in 3-7sz, \subref{Fig5.sub.2} learning curves in 3-7MMM, \subref{Fig5.sub.3} generalization to 5sz, and \subref{Fig5.sub.4} generalization to 5MMM, where \subref{Fig5.sub.3} and \subref{Fig5.sub.4} are the box plot.}
\label{Fig5.smac}
\end{figure*}

\section{Experiments}
\label{section: exp}
In this section, we evaluate our proposed CORD in a variety of cooperative multi-agent tasks including resource collection \citep{copa} in MPE\citep{mpe} and SMAC\citep{smac,refil} to empirically investigate whether CORD can enable better generalizable cooperation.
To ensure reproducibility, we include our code in the supplementary material and will make it open-source upon acceptance.

\subsection{Experimental Setting}

\textbf{Environment.} In resource collection, agents collect dispersed resources, facing invaders \& defending a home.


Multi-task SMAC utilizes variable types of agents with entity-defined states \& mask-based observations. Our experiments include two scenarios: \texttt{sz} and \texttt{MMM}. Additional two (\texttt{m} and \texttt{csz}) are available in \cref{exp_more}.

\textbf{Baselines and Ablation.}
We compare our method with \texttt{hierarchical RL methods}, COPA \citep{copa} and ALMA \citep{iqbalalma}, \texttt{CTDE with full observation methods}, Attention QMIX (AQMIX) and REFIL \citep{refil}, \texttt{communication-based method}, SOG \citep{shao2022self}, and \texttt{role-based method}, ROMA \citep{roma}.
Moreover, we introduce two ablation baselines. We fix the role distribution $\mathcal{P}(\mathbf{c})$ to be uniform at random, and we denote this baseline as MaxEnt. For the second one, we remove the intrinsic rewards from CORD, denoted as CORD w/o I.

\textbf{Training and Generalization Test.}
All methods are trained in multi-task settings with varying agent numbers: 2-4 in resource collection and 3-7 in SMAC during training.

To evaluate generalizable cooperation, we set up two types of generalization tests. 
\begin{itemize}
    \item \textit{\textbf{Generalization to unseen teams.}} The learned policy is applied to tasks with the number of agents different from training.
    \item \textit{\textbf{Generalization to unseen agents.}} 
    The learned policy is applied to control part of the team to cooperate with the remaining built-in, unseen agents in each environment.
\end{itemize}

All results are presented using the mean and standard deviation of five runs with different random seeds unless stated otherwise. More details about experimental settings and hyperparameters are available in Appendix~\ref{exp_detail}.


\subsection{Resource Collection}
\label{exp:rc}


Figure \ref{Fig4.sub.1} shows CORD significantly outperforms all baselines on training tasks, indicating well-coordinated policies from effective role assignment and a reasonable controller policy. Figures \ref{Fig4.sub.2} and \ref{Fig4.sub.3} demonstrate CORD's superior performance in generalization tests with unseen 5-agent and 6-agent teams, suggesting its role assignment's high generalizability. 

To evaluate generalization to unseen collaborators, we test 5-agent and 6-agent tasks with the setting of 1 to 4 controllable agents and 1 to 5 controllable agents, respectively, and report their average episode rewards for 5-agent and 6-agent tasks respectively in Table \ref{table1}. While all methods show decreased performance, CORD consistently outperforms baselines, demonstrating its superior generalization to unseen agents.


\begin{table}[tb!]
\footnotesize
\renewcommand{\arraystretch}{1.1}
      \caption{Average episode rewards of generalization tests for \textbf{\textit{unseen agents}} in resource collection, where we bold the highest episode rewards.}
          \label{table1}
          \centering
          \vspace{2mm}
            \begin{tabular}{|c||c|c|}
            \hline
            \diagbox{Methods}{Tasks}     & 5-agent task     &  6-agent task\\    
            \hline\hline
            AQMIX  & $115.69 \pm 3.54$  & $139.58 \pm 3.58$     \\
            \hline
            REFIL & $59.88 \pm 3.04$  & $70.99 \pm 2.79$     \\  
            \hline
            COPA & $130.02 \pm 2.24$  & $149.86 \pm 1.37$     \\
            \hline
            SOG  & $58.62\pm 28.99$  & $72.72\pm 32.56$     \\
            \hline
            ALMA & $54.86\pm 31.42$  & $65.85\pm 35.12$     \\
            \hline
            ROMA & $52.49\pm 25.58$  & $79.64\pm 34.50$     \\
            \hline\hline
            CORD     & $\mathbf{139.16} \pm 2.45$ & $\mathbf{166.334} \pm 2.65$     \\
            \hline
            \end{tabular}
\end{table}

\begin{table*}[t!]
\footnotesize
\renewcommand{\arraystretch}{1.1}
  \caption{Win rates of generalization tests for \textit{\textbf{unseen teams}} in SMAC, where we bold the highest win rate.}
  \label{table2}
  \vspace{2mm}
  \centering
  \small
  \begin{tabular}{|c||c|c|c|c|}
    \hline
    \diagbox{Methods}{Tasks}      & 2sz     &  8sz &  2MMM  & 8MMM\\
    \hline\hline
    AQMIX  & $0.161\pm 0.055$  & $0.171\pm 0.026$  & $0.187\pm 0.027$  &$0.192\pm 0.014$     \\
    \hline
    REFIL & $0.594 \pm 0.076$  & $0.434 \pm 0.011$ & $0.254 \pm 0.034$  & $0.596 \pm 0.032$    \\
    \hline
    COPA & $0.660 \pm 0.046$  & $0.370 \pm 0.214$ & $0.252 \pm 0.051$ & $0.612 \pm 0.068$     \\
    \hline
    SOG  & $0.469 \pm 0.059$  & $0.425 \pm 0.049$  & $0.147 \pm 0.135$ & $0.414 \pm 0.135$ \\
    \hline
    ALMA & $0.371 \pm 0.093$  & $0.375 \pm 0.072$  &$0.164 \pm 0.058$ & $0.617 \pm 0.062$  \\
    \hline
    ROMA & $0.063 \pm 0.024$  & $0.052 \pm 0.032$ & $0.043 \pm 0.016$ & $0.057 \pm 0.023$   \\
    \hline\hline
    CORD     & $\mathbf{0.696} \pm 0.027$ & $\mathbf{0.491} \pm 0.055$ & $\mathbf{0.288} \pm 0.085$ & $\mathbf{0.713} \pm 0.003$     \\
    \hline
  \end{tabular}
\end{table*}

\begin{table*}[t!]
  \caption{Ablation study of CORD in resource collection: training tasks, generalization tests for \textbf{\textit{unseen teams}}, and generalization tests for \textbf{\textit{unseen agents}}.}
  \label{ablation}
  \vspace{2mm}
  \centering
  \footnotesize
\renewcommand{\arraystretch}{1.1}
\begin{tabular}{|c|c|cc|cc|}
\hline
\multirow{2}{*}{\diagbox{Methods}{Tasks}} & \multirow{2}{*}{training tasks} & \multicolumn{2}{c|}{unseen teams} & \multicolumn{2}{c|}{unseen agents} \\ \cline{3-6} &  & \multicolumn{1}{c|}{5-agent task} & 6-agent task & \multicolumn{1}{c|}{5-agent task} & 6-agent task \\ \hline\hline
MaxEnt            &        $68.89 \pm 26.03$                   & \multicolumn{1}{c|}{$118.38 \pm 28.98$ }         &     $140.10 \pm 32.80$     & \multicolumn{1}{c|}{$110.01 \pm 6.36$}         &    $132.78 \pm 5.76$      \\ \hline
CORD w/o I        &      $115.48 \pm 9.71$                     & \multicolumn{1}{c|}{$176.12 \pm 12.04$}         &        $202.08 \pm 13.35$    & \multicolumn{1}{c|}{$92.99 \pm 15.72$}         &      $113.02 \pm 14.81$    \\ \hline \hline
CORD              &        $\mathbf{138.52} \pm 15.57$                    & \multicolumn{1}{c|}{$\mathbf{183.37} \pm 6.70$}         &   $\mathbf{210.53} \pm 7.71$       & \multicolumn{1}{c|}{$\mathbf{139.16} \pm 7.35$}         &      $\mathbf{166.334} \pm 7.96$    \\ \hline
\end{tabular}
\end{table*}

\subsection{SMAC}
\label{exp_smac}


Training results for 3-7sz and 3-7MMM scenarios are shown in Figures \ref{Fig5.sub.1} and \ref{Fig5.sub.2}, with CORD, COPA, and REFIL outperforming others. In the generalization test for unseen teams, 2sz, 8sz, 2MMM, and 8MMM, as shown in Table~\ref{table2}, CORD consistently surpasses baselines across all tasks.


For generalization tests with unseen agents in 5sz and 5MMM tasks, involving 1 to 4 controllable agents, Figure~\ref{Fig5.sub.3} and \ref{Fig5.sub.4} show CORD is again superior to all baselines despite observed performance degradation for all methods. 
Together with two more scenarios in SMAC: \texttt{m} and \texttt{csz}, available in \cref{exp_more}, there are 12 generalization tasks in SMAC. \textbf{CORD obtains the best win rate in 8 out of 12 tasks, while ALMA, COPA, and SOG are respectively 2/12, 1/12, and 1/12.}

In summary, despite strong training results, CTDE methods show decline in generalization tests due to inability to leverage global info for unseen teamwork. ROMA neglects agent interactions, struggling with collaboration in diverse teams. SOG's miscommunication risks faulty policies in generalization. COPA crudely manages global info, performing badly in hetergeneous environments. ALMA relies heavily on preset sub-task counts, working only in homogeneous settings (e.g., \texttt{m}). Next, we perform an ablation study to better understand CORD.



\subsection{Ablation Study}
\label{sec_ablation}

We conduct an ablation study in resource collection to investigate (1) whether a prior role distribution with the maximum entropy, denoted as MaxEnt, is enough for generalizable cooperation; (2) whether it is enough to learn only from environmental rewards, denoted as CORD w/o I, indicating the benefit of our objective function \eqref{f7}. For the ablation study in resource collection, the setting of training and generalization test is the same as in \cref{exp:rc}. 

As shown in Table~\ref{ablation}, CORD substantially outperforms both CORD w/o I and MaxEnt on the training tasks, generalization to unseen teams, and generalization to unseen agents. With our objective function converted as intrinsic rewards, CORD obtains a large performance gain over CORD w/o I. CORD is better than MaxEnt in all the settings, indicating a prior role distribution with the maximum entropy offers little guidance to role assignment and even degrades the performance. On the other hand, MaxEnt outperforms CORD w/o I in generalization to unseen agents, highlighting the potential benefit of the maximum entropy principle may help there. 


To this end, we can conclude that directly optimizing the prior role distribution may not be beneficial for generalizable cooperation. However, our objective function, i.e., optimizing the entropy with constraints, effectively improves both training and generalization across different teams with unseen agents.

\section{Conclusion and Limitation}
\label{section: conclusion}
In this paper, we propose CORD, a hierarchical MARL approach leveraging role diversity for generalizable cooperation. A high-level controller assigns roles to low-level agents, whose policies depend on these roles. We formulate the problem of generalizable role assignment as the constrained optimization of entropy and mathematically decompose the objective into two terms: causal inference in role and role heterogeneity. The two terms are further converted to intrinsic rewards and optimized end-to-end. Empirically, we evaluate CORD in a variety of cooperative multi-agent tasks. Results show CORD substantially outperforms baselines in generalization tests for unseen teams and unseen agents. Ablation studies verify the efficacy of our constrained optimization objective.  

A limitation of CORD is it periodically requires global information and assigns roles to agents during execution, which may limit the flexibility and adaptability of decentralized multi-agent systems. Yet, this represents a tradeoff between generalizability and decentralization.

\bibliography{NIPS2023/ref}

\appendix
\onecolumn
\section{Proof of Theorem \ref{t1}}
\label{section: proof}
\label{SPI_proof}

\begin{proof}
First, given the definitions of mutual information and entropy, the equation \eqref{f7} is valid. We then prove that the equation \eqref{eq:causal} is valid. As shown in Lemma \ref{l3}, the mutual information between roles and influence from other agents can be estimated using the $\DO$ operator \citep{pearl2009causal}.
\begin{lemma}\label{l3}
    Suppose that N agents satisfy the causality for any agent $i$ which is shown in the causal graph (Figure~\ref{fig2:causalgraph}), and there is a multivariate role distribution $\mathcal{P(\mathbf{c})}$ which satisfies Definition \ref{d2}, then 
\begin{align*}
        \kl[\postjt||\margjt] = \sum_{i=1}^{N} \kl[\posti || \margdoi].
\end{align*}
\end{lemma}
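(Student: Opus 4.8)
The plan is to reduce Lemma~\ref{l3} to two structural facts about the distributions involved: a factorization across agents, and the tensorization (additivity) of KL divergence over product measures. Concretely, I would first show that both the posterior $\postjt$ and the reference distribution on the right-hand side factor into per-agent terms, and then invoke the identity
\begin{align*}
    \kl\!\left[\textstyle\prod_i p_i \,\middle\|\, \textstyle\prod_i q_i\right] = \sum_i \kl[p_i \| q_i],
\end{align*}
valid for product distributions over the same index set, to collapse the joint divergence into the claimed sum.

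First I would establish the factorization of the posterior. Under Assumption~\ref{a2} and the causal graph of Figure~\ref{fig2:causalgraph}, each role $\rolei$ has exactly two parents: its own query $\queryi=\mathbf{q}_t^i$ (a function of $\mathbf{o}_t^i$ and $\mathbf{h}^{team}_t$) and its own influence vector $\influi=\boldsymbol{\bar{I}}_t^i$ (the attention-aggregated summary of the other agents' information, per Definition~\ref{d1}). Since the roles $\{\rolei\}$ are terminal nodes sharing no common descendant, and each is $d$-separated from the remaining roles once its own parents are fixed, conditioning on $(\influjt,\queryjt)$ blocks every active path between distinct roles, so
\begin{align*}
    \postjt = \prod_{i=1}^{N} \posti .
\end{align*}
I would read this conditional independence off the graph structure directly.

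The main obstacle is the second factor, namely identifying the per-agent reference $\margi$ with the interventional distribution $\margdoi$. Here I would use the causal reading of $\DO$: because $\queryi$ carries the sufficient individual information and, by the graph, blocks every backdoor path from $\influi$ to $\rolei$, the interventional and observational conditionals coincide, $\mathcal{P}(\rolei \mid \DO(\influi=x), \queryi) = \mathcal{P}(\rolei \mid \influi=x, \queryi)$. Averaging over interventions with the natural law of $\influi$ — precisely the meaning the paper assigns to $\DO(\influi)$ as ``the expectation of all possible interventions'' — then recovers the marginal, giving $\margdoi = \margi$. Marginalizing the factored posterior over the influence consequently yields $\margjt = \prod_i \margi = \prod_i \margdoi$; the delicate point I would need to argue carefully is that this marginalization preserves the product form, which follows from the same $d$-separation together with the backdoor-free structure of the graph.

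Finally, substituting the two factorizations into the tensorization identity gives
\begin{align*}
    \kl[\postjt \| \margjt] = \sum_{i=1}^{N} \kl[\posti \| \margdoi],
\end{align*}
which is exactly the claim. To close the loop with Theorem~\ref{t1}, I would then take $\E_{\taujt}$ of both sides and recognize the left-hand expectation as the conditional mutual information $\MI(\rolejt;\influjt\mid\queryjt)$, delivering equation~\eqref{eq:causal}. I expect the genuinely nontrivial content to be concentrated entirely in the do-operator identity $\margdoi = \margi$; the factorization and the KL additivity are routine once the graph structure is made explicit.
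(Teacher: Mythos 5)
Your proposal is correct and follows essentially the same route as the paper: factorize the joint posterior and marginal across agents, collapse the joint KL into a sum of per-agent KLs (the paper proves this tensorization step by explicitly expanding the integral and integrating out the other coordinates, which you instead cite as a known identity), and then identify $\margi$ with $\margdoi$ by reading $\DO(\influi)$ as the average over all interventions. Your backdoor-criterion justification of that last identification is somewhat more explicit than the paper's, which simply rewrites $\margi$ as a mixture over counterfactual influence vectors and appeals to \citet{pearl2009causal} and \citet{jaques2019social}.
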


\begin{proof}
\begin{align*}
        \kl[\postjt || \margjt] &= \int \! \postjt \log \frac{\postjt}{\margjt} \, d\rolejt \\
                                &= \int \! \postjt \log \frac{\prod_{i=1}^{N}\posti}{\prod_{i=1}^{N}\margi} \, d\rolejt \\
                                &= \int \! \postjt \sum_{i=1}^{N} \log \frac{\posti}{\margi} \, d\rolejt \\
                                &= \sum_{i=1}^{N} \int \! \postjt \log \frac{\posti}{\margi} \, d\rolejt \\
                                & = \sum_{i=1}^{N} \int \! \prod_{i=1}^{N} \posti \log \frac{\posti}{\margi} \, d\rolejt \\
                                & = \sum_{i=1}^{N} \int \! \posti \log \frac{\posti}{\margi} \, d\rolei \\
                                & = \sum_{i=1}^{N} \kl[\posti || \margi] \\
                                & = \sum_{i=1}^{N} \kl[\posti || \sum_{\counterinflui}\mathcal{P}(\rolei|\counterinflui, \queryi)\mathcal{P}(\counterinflui|\queryi)],
\end{align*}
where $\counterinflui$ is any potential counterfactual influence vector from other agents. Based on \citet{pearl2009causal} and \citet{jaques2019social}, we can use $\DO(\influjt)$ to represent the average influence from other agents. Therefore, 
\begin{align*}
        \E_{\taujt}[\kl[\postjt || \margjt]] =  \E_{\taujt}[\sum_{i=1}^{N} \kl[\posti || \margdoi]].
\end{align*}
\end{proof}

Lastly, we demonstrate that the equation \eqref{eq:role-H} is valid by Lemma \ref{l2}, where we show that the entropy of the posterior role distribution $\postjt$ corresponds to the determinant of the role-related matrix. 
\begin{lemma}\label{l2}
 Suppose that both prior role distribution $\priorjt$ and the posterior role distribution $\postjt$ obey Gaussian distribution, $\mathbf{c} \sim \mathcal{N}(\boldsymbol{\mu}', \boldsymbol{\Sigma})$ given $\influjt$ and $\queryjt$, then the role heterogeneity can be transformed to the entropy of the posterior role distribution.
\end{lemma}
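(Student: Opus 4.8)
The plan is to reduce the lemma to the closed-form expression for the differential entropy of a multivariate Gaussian and then match its $\log$-determinant to $|A(\mathbf{c})|$. Recall that for $\mathbf{c}\sim\mathcal{N}(\boldsymbol{\mu}',\boldsymbol{\Sigma})$ in $\mathbb{R}^N$ the entropy is $\mathcal{H}=\tfrac12\log\bigl((2\pi e)^N\det\boldsymbol{\Sigma}\bigr)=\tfrac12\log\det\boldsymbol{\Sigma}+\tfrac{N}{2}\log(2\pi e)$, which already isolates a $\log$-determinant plus a constant. Hence the entire statement collapses to identifying the covariance structure $\boldsymbol{\Sigma}$ of the joint posterior $\mathcal{P}(\mathbf{c}|\boldsymbol{\bar{I}},\mathbf{q})$ with the similarity matrix $A(\mathbf{c})$ of Definition~\ref{d2}, up to an affine change, so that $\tfrac12\log\det\boldsymbol{\Sigma}$ becomes $\beta\log|A(\mathbf{c})|+C$ with $\beta=\tfrac12$ and $C=\tfrac{N}{2}\log(2\pi e)$, recovering \eqref{eq:role-H}.

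First I would make the Gaussian hypothesis explicit at the per-agent level, writing each posterior marginal as $\mathcal{P}(\mathbf{c}_i|\boldsymbol{\bar{I}}_i,\mathbf{q}_i)=\mathcal{N}(\boldsymbol{\mu}_i,\boldsymbol{\Sigma}_i)$, and compute in closed form the symmetrized (Jeffreys) divergence $d_{ij}$ that defines $A(\mathbf{c})_{ij}=e^{-d_{ij}}$. Using the standard Gaussian $D_{\mathrm{KL}}$ identity and symmetrizing, $d_{ij}$ reduces to a trace-plus-quadratic expression in the means and covariances; in the homogeneous regime where the posterior covariances agree, $\boldsymbol{\Sigma}_i=\boldsymbol{\Sigma}_j=\boldsymbol{\Sigma}$, the trace terms cancel and this collapses to the squared Mahalanobis distance $d_{ij}=(\boldsymbol{\mu}_i-\boldsymbol{\mu}_j)^\top\boldsymbol{\Sigma}^{-1}(\boldsymbol{\mu}_i-\boldsymbol{\mu}_j)$ between role means, with $d_{ii}=0$ so that $A(\mathbf{c})_{ii}=1$.

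The key step is then to observe that $A(\mathbf{c})_{ij}=e^{-d_{ij}}$ is exactly a Gaussian (RBF) kernel evaluated on the role means, hence a positive semidefinite matrix with unit diagonal, which legitimizes treating $A(\mathbf{c})$ as the correlation matrix of a jointly Gaussian role vector. The determinant $|A(\mathbf{c})|$ is then the normalized covariance determinant, and the volume/DPP interpretation cited from \citet{parker2020effective} shows it measures the enclosed volume spanned by the roles. Substituting $\det\boldsymbol{\Sigma}\propto|A(\mathbf{c})|$ into the entropy formula yields \eqref{eq:role-H} with the constants above.

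The main obstacle I anticipate is the identification step itself: arguing rigorously that the joint posterior entropy over the $N$ agents equals $\tfrac12\log|A(\mathbf{c})|$ up to a constant, rather than the additive $\sum_i\tfrac12\log\det\boldsymbol{\Sigma}_i$ that the product form used in Lemma~\ref{l3} would naively suggest. This requires an explicit modeling choice that endows the joint role vector with covariance $A(\mathbf{c})$ (reading pairwise similarities as correlations) and absorbs the per-agent scale into $C$ so that only the cross-role similarity structure survives in the determinant. I would also need to confirm positive semidefiniteness of $A(\mathbf{c})$ beyond the equal-covariance case, since for general $\boldsymbol{\Sigma}_i$ the symmetrized KL is no longer a squared metric of negative type and $e^{-d_{ij}}$ need not be a valid kernel; restricting to, or explicitly justifying, the homogeneous-covariance regime is the cleanest way to guarantee a well-defined Gaussian and a meaningful determinant.
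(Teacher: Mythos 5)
Your proposal takes essentially the same route as the paper: both identify the similarity matrix $A(\mathbf{c})$ with the covariance $\boldsymbol{\Sigma}$ of the joint Gaussian posterior and then read off \eqref{eq:role-H} from the closed-form differential entropy of a multivariate Gaussian, with $\beta=\tfrac{1}{2}$ and $C=\tfrac{N}{2}\log(2\pi e)$. The paper asserts the identification $A(\mathbf{c})=\boldsymbol{\Sigma}$ in one line by citing the Gaussian-process kernel literature, so the positive-semidefiniteness issue you raise for $e^{-d_{ij}}$ with a symmetrized-KL exponent is precisely the step it leaves unexamined; your additional scrutiny there goes beyond, rather than against, the paper's argument.
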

It can be formalized as:
\begin{align*}
        \Ent(\postjt) = \beta\log |A(\rolejt)| + C,
\end{align*}
where $\beta$ and C are constants.
\begin{proof}
Suppose that the $\mathbf{c}$-related matrix $A(\mathbf{c})$ satisfies Definition~\ref{d2}, then $A(\mathbf{c})$ = $\boldsymbol{\Sigma}$ according to the definition of a valid covariance matrix for a Gaussian process \citep{smola1998learning, abrahamsen1997review}, where $\boldsymbol{\Sigma}$ is the covariance matrix of the posterior role distribution $\mathcal{P}(\mathbf{c}|\boldsymbol{\bar{I}}, \mathbf{q})$, then:
\begin{align*}
        \Ent(\postjt) = \frac{N}{2}\log(2\pi e) + \frac{1}{2}\log |A(\rolejt)|.
\end{align*}

Therefore $\mathcal{H}(\mathcal{P}(\rolejt|\boldsymbol{\bar{I}}, \mathbf{q}))$ can be represented as:
\begin{align*}
    \mathcal{H}(\mathcal{P}(\rolejt|\boldsymbol{\bar{I}}, \mathbf{q})) = \beta \log |A(\mathbf{c})| + C,
\end{align*}
where $\beta$ and C are constants.
\end{proof}

Combining Lemma \ref{l3} and \ref{l2}, we can have the following equation to decompose the entropy of the prior role distribution:
\begin{align*}
    \Ent(\priorjt) &= \MI(\rolejt;\influjt|\queryjt) + \Ent(\postjt) \\
                   &= \MI(\rolejt;\influjt|\queryjt) + \beta\log |A(\rolejt)| + C \\
                   &= \E_{\taujt}[\sum_{i=1}^{N} \kl[\posti || \margdoi]] + \beta\log |A(\rolejt)| + C,
\end{align*}
which concludes the proof of Theorem \ref{t1}.
\end{proof}

\section{Pseudocode of CORD}
\label{section: code}
\label{app:algorithm}
In this section, we present the pseudocode of CORD in Algorithm \ref{code1}, which corresponds to the framework in Section \ref{frameworkSection}.

\begin{algorithm}[ht!]
\begin{algorithmic}
\STATE Initialize replay memory $D$\\
\STATE Initialize the posterior role policy network $G$ with random parameters $\delta$\\
\STATE Initialize $[Q^i]$, $Q^{tot}$ with random parameters $\theta$  \\
\STATE Initialize target parameter $\bar \theta=\theta$ \\
\STATE {\bf Input:}
$[\boldsymbol{\bar{I}}_t^i]$, $[\mathbf{q}_t^i]$ for the posterior role policy network $G$ satisfying the definition \ref{d1}\\
\STATE {\bf Output:}
$[\mathcal{P}(\mathbf{c}_i|\boldsymbol{\bar{I}}_t^i, \mathbf{q}_t^i)]$, $A(\mathbf{c})$ satisfying the definition \ref{d2}, $[Q^i]$, $Q^{tot}$\\
\FOR{episode = 1 to M} 
    \STATE Observe initial state $\mathbf{s}_0$ and observation $\mathbf{o}_0 = [O(\mathbf{s}_0, i)]^N_{i=1}$ for each agent $i$\\
    \FOR{$t$ = 1 to T} 
    \STATE With probability $\epsilon$ select a random action $\boldsymbol{a}^i_t$\\
    \STATE Otherwise $\boldsymbol{a}_t^i=\arg \max_{\boldsymbol{a}^i_t}Q^i(\boldsymbol{\tau}^i_t, \boldsymbol{a}^i_t)$ for each agent $i$\\
    \STATE Take action $\boldsymbol{a}_t$ and retrieve next observation and reward ($\mathbf{o}_{t+1}, r_t$)\\
    \STATE Store transition ($\boldsymbol{\tau}_t, \boldsymbol{a}_t, r_t, \boldsymbol{\tau}_{t+1}$) in $D$\\
    \STATE Sample a random minibatch of transitions ($\boldsymbol{\tau}, \boldsymbol{a}, r, \boldsymbol{\tau}'$) from $D$\\
    \STATE Compute $r_c$ and $r_d$ by the equation \eqref{r_c} and \eqref{r_d} with outputs $[\mathcal{P}(\mathbf{c}_i|\boldsymbol{\bar{I}}_t^i, \mathbf{q}_t^i)]$ and $A(\mathbf{c})$\\
    \STATE Set $\lambda_c$ and $\lambda_d$, compute the shaped reward $r$ by the equation \eqref{newReward} \\
    \STATE Update $\theta$ by minimizing the loss \eqref{eq2}\\
    \STATE Update target network parameters $\bar \theta=\theta$ with period $J$\\
    \ENDFOR
\ENDFOR
\end{algorithmic}
\caption{CORD}
\label{code1}
\end{algorithm}

\section{Experiment Settings and Implementation Details}
\label{section: exp details}
\label{exp_detail}
\subsection{Navigation Control in MPE}
\label{nc}
This task involves 4 agents and 4 landmarks. Each agent can take one of five actions. The objective for agents is to minimize the total distance to all targets while avoiding collisions in 200 timesteps. For generalization to unseen agents, we also use built-in agents as unseen agents and test the performance in a 4-agent task. For CORD, we use a learning rate of $3 \times 10^{-4}$. The hyperparameters of \textit{causal inference in role} $\lambda_c$ and \textit{role heterogeneity} $\lambda_d$ are fixed as 0.001 throughout the 6M training timesteps. The batch size used in the experiment is 256. The controller network of CORD contains 8 layers. Three fully connected layers are used to encode observation, action, and entity information into three 128-dimensional vectors respectively. One multi-head attention layer with 4 heads takes observation embedding vectors as input and outputs the 128-dimensional hidden vector of the global information. One fully connected layer encodes the hidden vector of the global information and observation embedding vectors and outputs the 128-dimensional vector of individual information for each agent. Another multi-head attention layer with 4 heads takes the vectors of the other agents' information as input for generating the 128-dimensional influence vectors about other agents. The MLP head layer contains 2 fully connected layers taking the individual information and influence vectors as input, generating the mean and variance of the distribution respectively. The target networks are updated after every 200 training episodes. For the environment, we use MPE~\citep{mpe} with MIT license. We implement the default configurations of AQMIX \citep{copa} and REFIL \citep{shao2022self} with the MIT license. We use the original code of COPA \citep{copa}. For ALMA \citep{iqbalalma}, we integrated the code of the resource collection environment and set the number of subtasks to one. ROMA \citep{roma} is adapted by migrating entity-based environment code into the algorithm, enabling ROMA to accommodate tasks with varying numbers of agents. Lastly, for SOG \citep{shao2022self}, we also utilize the original code directly. The environment and model are configured in Python. All models are constructed utilizing PyTorch and trained on a machine with 1 Nvidia GPU (GTX 1080 TI) and 12 Intel CPU cores.

\subsection{Resource Collection}
In resource collection \citep{copa}, agents cooperate to gather 3 different types of resources (red, green, blue) scattered in the environment. The scenario includes 3 entities: invaders, agents, and home. The goal of this task is to collect resources and bring them home while intercepting invaders to defend the home. Agents have 5 actions and observe entities within 0.2 units. Agents can only hold one resource at a time, so they must bring the holding resource home before collecting another one. Invaders periodically appear and move to the home. Episodes last 145 timesteps. In resource collection, we implement the default settings of COPA and AQMIX in \citet{copa} and we use the configuration of REFIL in \citet{shao2022self} with the MIT license. For the implementation of CORD, the details are the same in MPE (\ref{nc}) except we train all methods for 10M timesteps. The environment and model are implemented in Python. All models are built on PyTorch and are trained on a machine with 1 Nvidia GPU (GTX 1080 TI) and 12 Intel CPU Cores.

\subsection{SMAC}
In StarCraft II, for CORD, we use a learning rate of $5 \times 10^{-4}$. The hyperparameters of \textit{causal inference in role} $\lambda_c$ and \textit{role heterogeneity} $\lambda_d$ are fixed as 0.0025 throughout the 6M training timesteps for all maps. Except the above three parameters, the implementation of CORD is the same as it in MPE. For AQMIX and REFIL, we implement the default configurations for each scenario. Our implementation of CORD and COPA derives from REFIL \citep{refil} with the MIT license. The environment and model are configured in Python. All models are constructed utilizing PyTorch and trained on a system with 4 Nvidia GPUs (A100) and 224 Intel CPU cores.

\section{More Experiments on MPE and SMAC}
\label{section: more exp}
\label{exp_more}
\subsection{Navigation Control in MPE}
We further evaluate CORD on Navigation Control in MPE tasks. As shown in Table \ref{table6}, CORD achieves markedly superior performance than all baselines in terms of training performance. The enhanced performance indicates that CORD has formulated a sound controller policy for role assignment. In terms of the generalization for unseen agents, CORD substantially outperforms all baselines. This result demonstrates the role assignment of CORD also generalizes more comprehensively to unseen agents.

\begin{table}[t!]
      \caption{Average episode rewards of training and generalization tests for \textbf{\textit{unseen agents}} (4-agent task) in navigation control, where we bold the highest episode rewards.}
          \label{table6}
          \vspace{2mm}
          \centering
            \centering
  \footnotesize
\renewcommand{\arraystretch}{1.1}
            \begin{tabular}{|c||c|c|}
            \hline
            Methods     & Training performance     &  Generalization \\    
            \hline\hline
            AQMIX  & $-346.80 \pm 33.50$  & $-327.99 \pm 18.74$     \\
            \hline
            REFIL & $-128.58 \pm 23.76$  & $-350.15 \pm 3.18$     \\  
            \hline
            COPA & $-334.67 \pm 60.31$  & $-285.21 \pm 22.11$     \\
            \hline
            SOG  & $-293.77 \pm 82.98$  & $-366.49 \pm 115.98$     \\
            \hline
            ALMA & $-9491.02 \pm 36.91$  & $-6743.03 \pm 2040.30$     \\
            \hline
            ROMA & $-1722.81 \pm 1123.53$  & $-881.29 \pm 529.23$     \\
            \hline\hline
            CORD     & $\mathbf{-108.67} \pm 12.46$ & $\mathbf{-114.21} \pm 7.12$     \\
            \hline
            \end{tabular}
\end{table}

\subsection{More Maps in SMAC}

\begin{figure}[ht!]
\centering
\begin{subfigure}{0.228\textwidth}
	\includegraphics[width=1\linewidth]{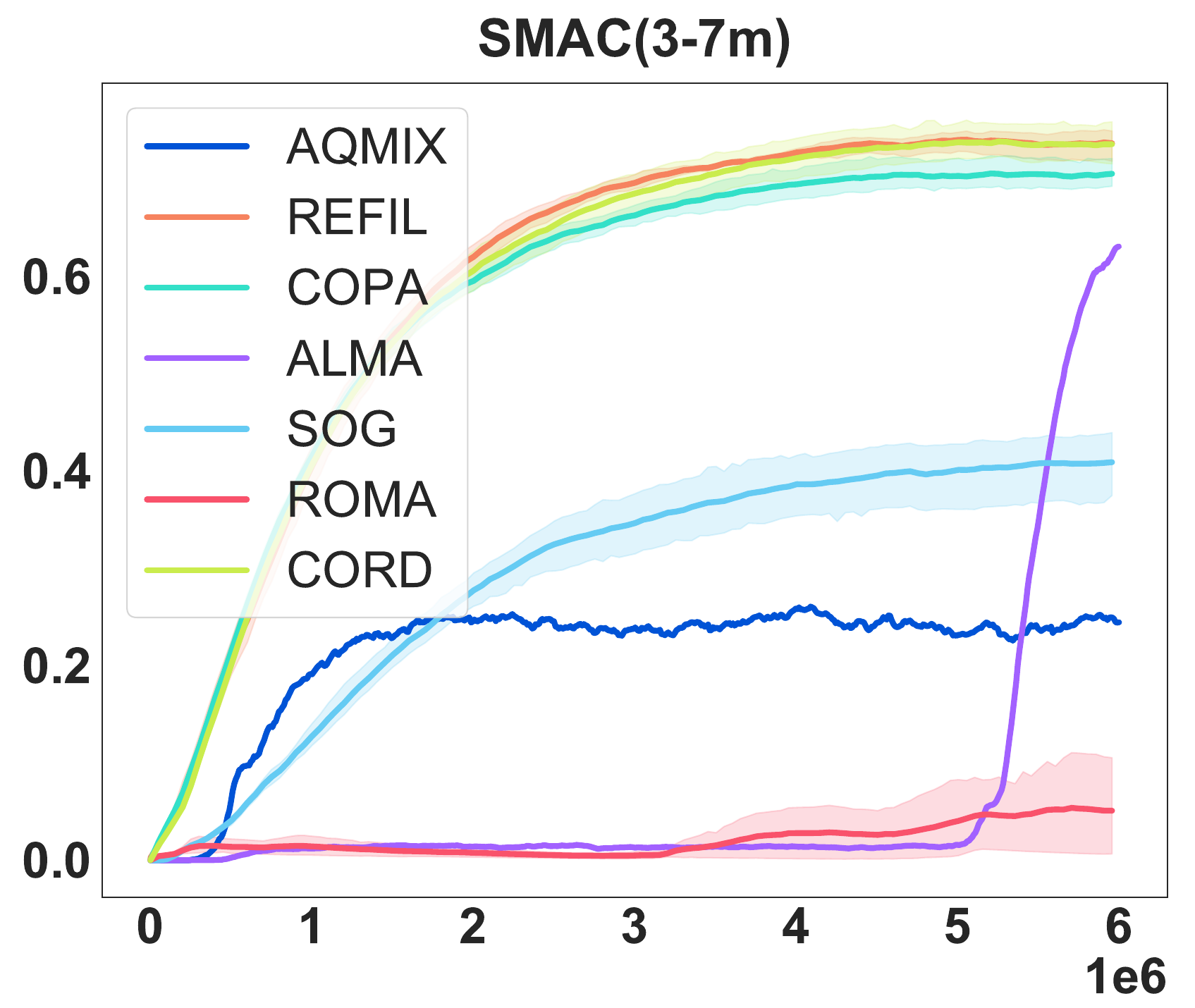}
        \caption{3-7m}
	\label{Fig7.sub.1}
\end{subfigure}
\quad
\begin{subfigure}{0.228\textwidth}
	\includegraphics[width=1\linewidth]{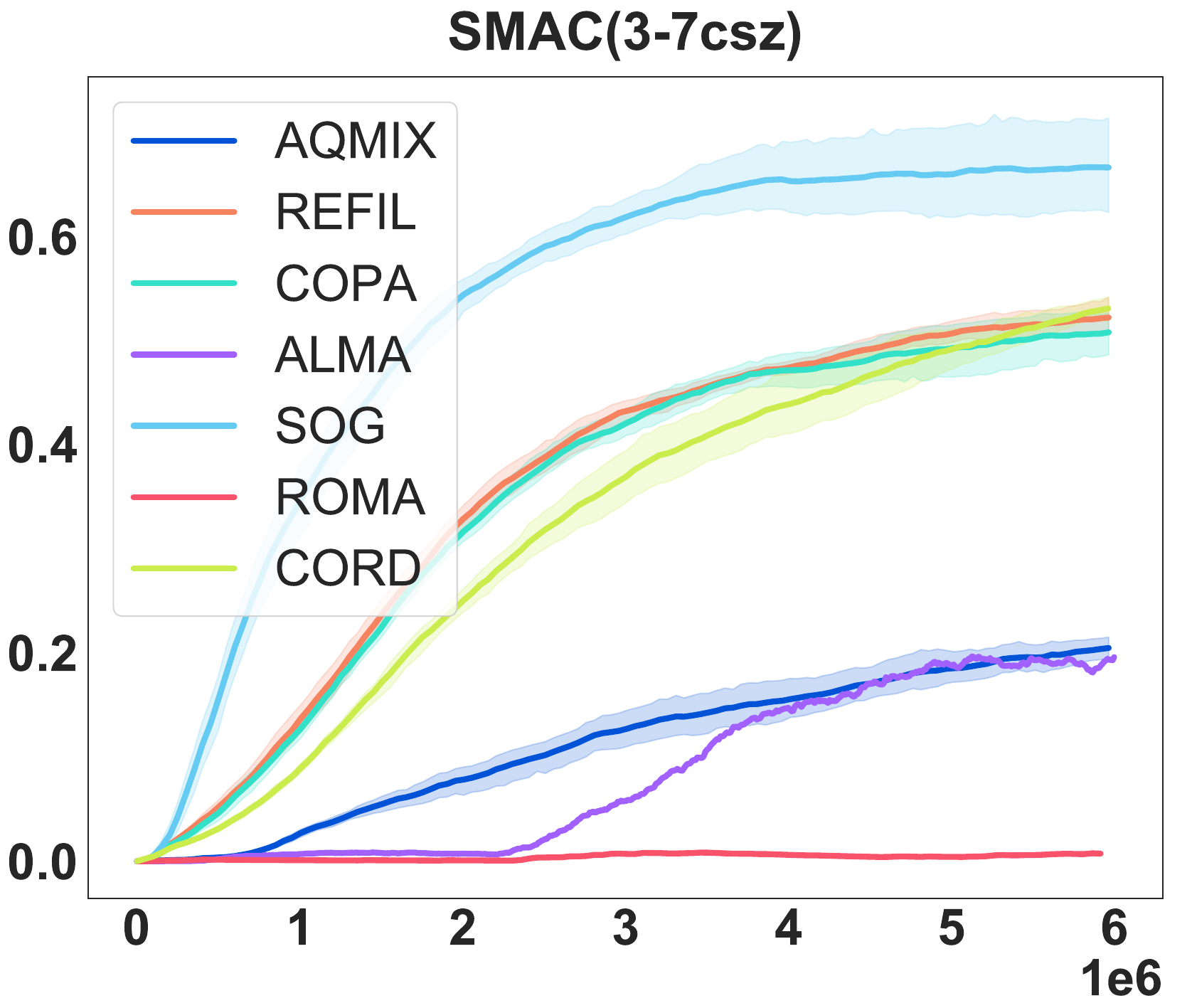}
        \caption{3-7csz}
	\label{Fig7.sub.2}
\end{subfigure}
\quad
\begin{subfigure}{0.228\textwidth}
	\includegraphics[width=1\linewidth]{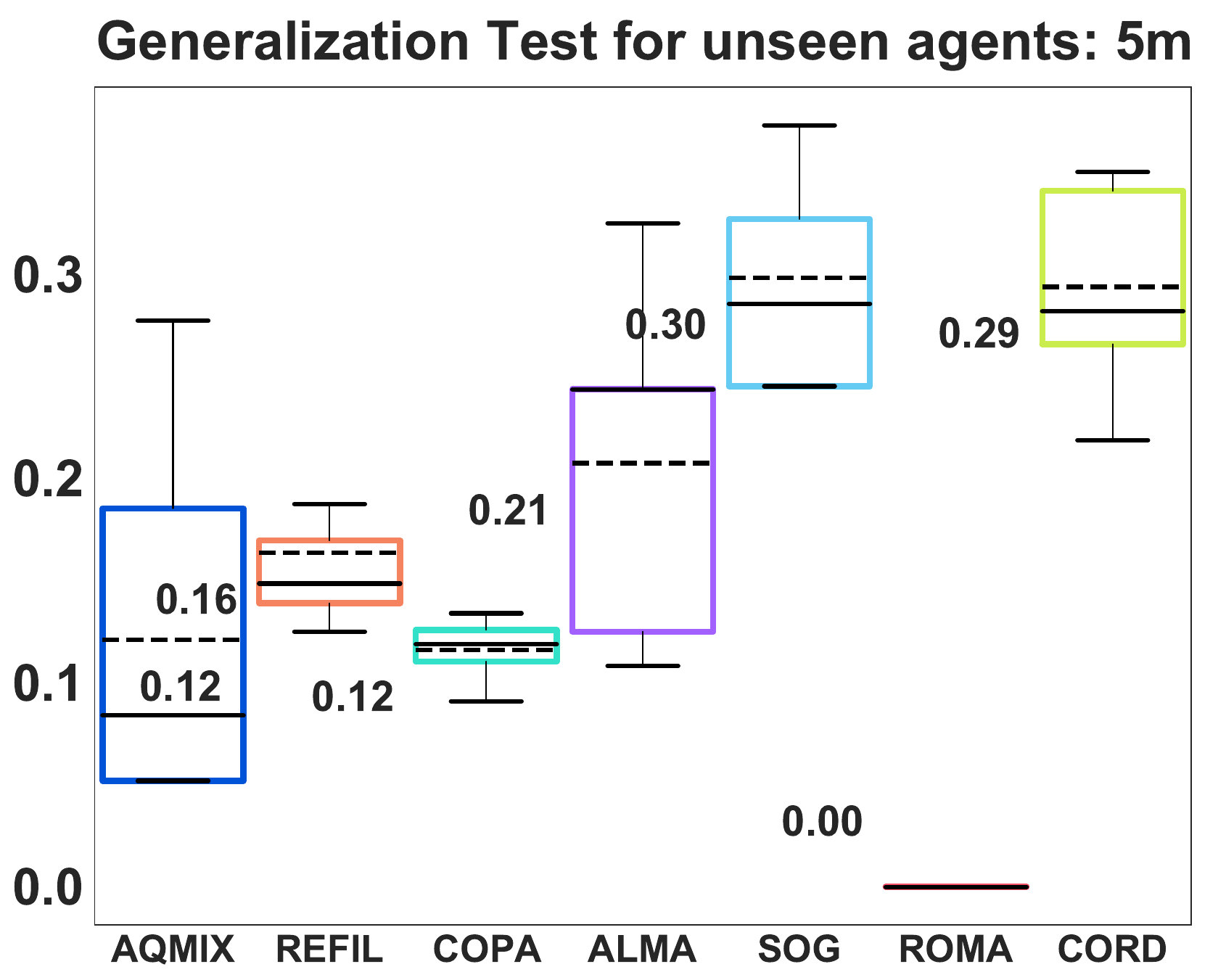}
       \captionsetup{skip=10pt}
       \caption{5m}
	\label{Fig7.sub.3}
\end{subfigure}
\quad
\begin{subfigure}{0.228\textwidth}
	\includegraphics[width=1\linewidth]{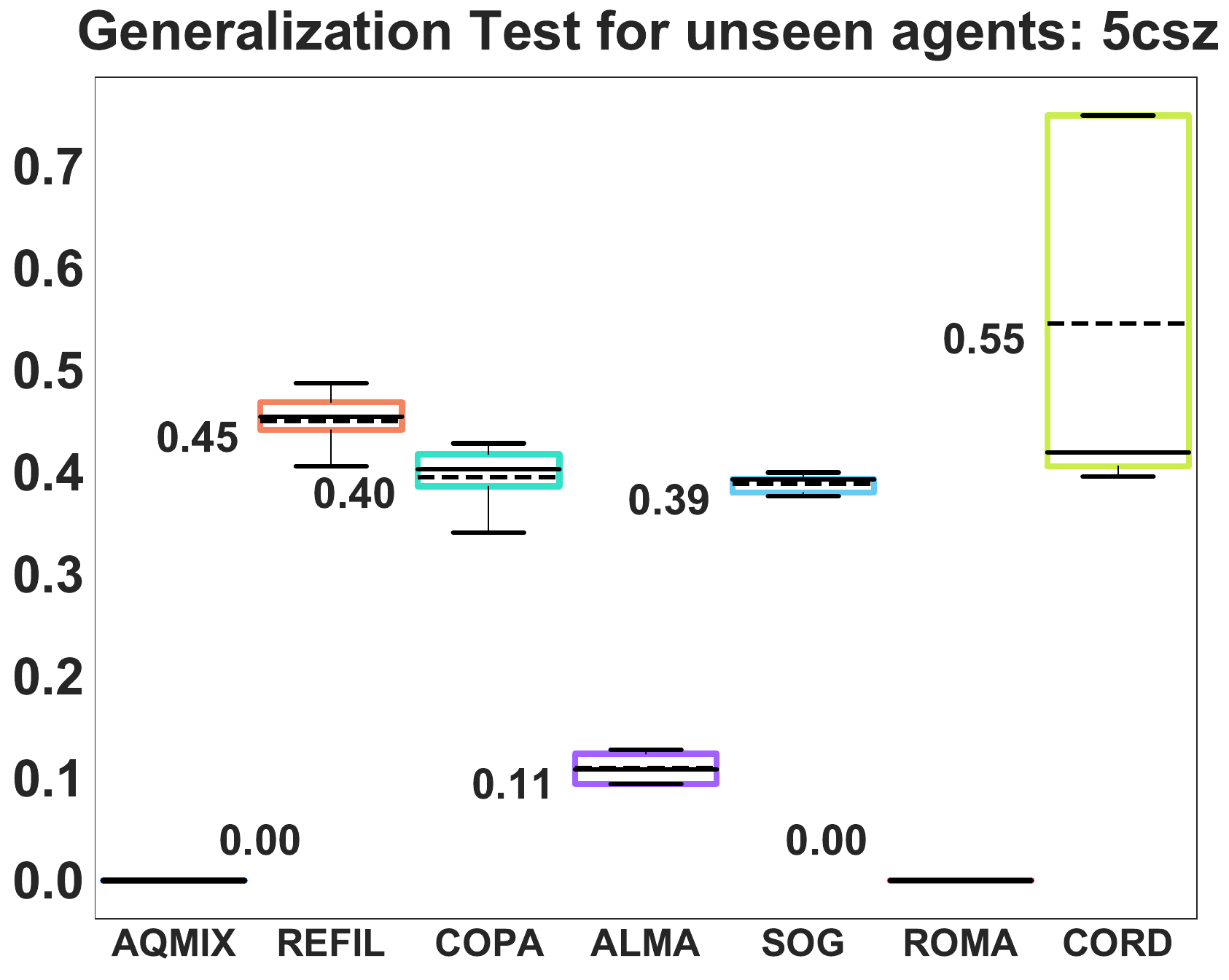}
        \captionsetup{skip=10pt}
        \caption{5csz}
	\label{Fig7.sub.4}
 \end{subfigure}
\caption{
Win rates on training and generalization to \textbf{\textit{unseen agents}} of CORD compared with all baselines on more maps in SMAC: \subref{Fig7.sub.1} learning curves in 3-7m, \subref{Fig7.sub.2} learning curves in 3-7csz, \subref{Fig7.sub.3} generalization to 5m, \subref{Fig7.sub.4} generalization to 5csz, where \subref{Fig7.sub.3} and \subref{Fig7.sub.4} are the box plot.}
\label{Fig7.moreMapsmac}
\end{figure}

\begin{table}[htb!]
  \caption{Win rates of generalization tests for \textit{\textbf{unseen teams}} on more maps in SMAC, where we bold the highest win rate.}
  \label{table7}
  \vspace{2mm}
  \centering
  \footnotesize
  \renewcommand{\arraystretch}{1.1}
  \begin{tabular}{|c||c|c|c|c|}
    \hline
    \diagbox{Methods}{Tasks}     & 2m     &  8m &  2csz  & 8csz\\
    \hline\hline
    AQMIX  & $0.481\pm 0.107$  & $0.485 \pm 0.070$  & $0.224\pm 0.060$  &$0.226\pm 0.051$     \\
    \hline
    REFIL & $0.564 \pm 0.064$  & $0.656 \pm 0.090$ & $0.520 \pm 0.039$  & $0.572 \pm 0.061$    \\
    \hline
    COPA & $0.570 \pm 0.014$  & $0.488 \pm 0.316$ & $\mathbf{0.583} \pm 0.045$ & $0.530 \pm 0.036$     \\
    \hline
    SOG  & $0.694 \pm 0.123$  & $0.674 \pm 0.063$  & $0.425 \pm 0.091$ & $0.423 \pm 0.065$ \\
    \hline
    ALMA & $\mathbf{0.809} \pm 0.021$  & $\mathbf{0.791} \pm 0.038$  &$0.256 \pm 0.050$ & $0.272 \pm 0.061$  \\
    \hline
    ROMA & $0.056 \pm 0.023$  & $0.048 \pm 0.031$ & $0.060 \pm 0.027$ & $0.057 \pm 0.031$   \\
    \hline\hline
    CORD     & $0.593 \pm 0.031$ & $0.655 \pm 0.013$ & $0.543 \pm 0.038$ & $\mathbf{0.590} \pm 0.060$     \\
    \hline
  \end{tabular}
\end{table}

We further evaluate CORD on two more SMAC maps, 3-7m and 3-7csz. As we can see in Figure \ref{Fig7.sub.1} and \ref{Fig7.sub.2}, CORD, COPA, and REFIL achieve comparable performance in training. In terms of generalization to unseen teams, as shown in Table \ref{table7}, CORD outperforms baselines on the 8csz task. Furthermore, in terms of the generalization to unseen agents, we assess 5m and 5csz tasks with the setting of 1 to 4 controllable agents. As illustrated in Figure~\ref{Fig7.sub.3} and \ref{Fig7.sub.4}, performance decay is still observed for all methods. However, CORD's performance is comparable to that of the best baseline in the 5m scenario, achieving the desired win rate of 0.29. In the 5csz scenario, CORD slightly outperforms all baselines, reaching the win rate of 0.55. Together with two maps in \cref{exp_smac}, there are a total of 12 generalization tasks in SMAC. {\textbf{CORD obtains the best win rate in 8 out of 12 tasks, while ALMA, COPA, and SOG are respectively 2/12, 1/12, and 1/12.}} 

As for the communication-based method, it is beneficial for processing information from other agents but is effective primarily in homogeneous environments like \texttt{m} maps. The misinterpretation of information in communication probably leads to agents misunderstanding each other's data and consequently producing incorrect collaborative policies. Therefore, under complex role distributions, the efficacy of communication diminishes. According to the experimental results, although SOG demonstrates more prominent training performance on the \texttt{csz} map, the generalization results indicate that SOG experiences the most significant performance degradation, exhibiting overfitting phenomena. Consequently, its generalization ability is relatively weaker compared to other models. In hierarchical RL approaches, COPA processes global information in a rudimentary manner, resulting in better performance in homogeneous environments. However, in tasks with complex role combinations, COPA is inferior to CORD, particularly in considering the impact of other agents for role assignment. For the 2csz map, the team consists of only two agents, thus having at most two unit types, with half the scenarios featuring just one type. This essentially makes the 2csz map a homogeneous environment, which is advantageous for COPA. ALMA relies significantly on the predefined number of subtasks, set to one in our experiments, making it effective only in homogeneous environments. For the \texttt{m} maps, where 100\% of scenarios involve single unit types, creating a completely homogeneous environment. ALMA, with its subtask number set to one, is highly advantageous. Given the characteristics of subtasks, ALMA requires a predefined subtask number to potentially guide diversity in role distribution. Finally, the reason for CORD's inferior performance on the \texttt{m} and \texttt{2csz} maps may be attributed to its potential issue of excessive role diversity in homogeneous environments. CORD's overemphasis on role diversity leads to performance degradation in scenarios where diverse role distribution is unnecessary.

\section*{Impact Statements}
This paper presents work whose goal is to advance the field of Machine Learning. There are many potential societal consequences of our work, none of which we feel must be specifically highlighted here.

\end{document}